\newcommand{\vbeta}{\mbox{\boldmath $\beta$}}
\newcommand{\Real}{\mathcal{R}}
\newcommand{\va}{{\bf a}}
\newcommand{\vb}{{\bf b}}
\newcommand{\vW}{{\bf W}}
\newcommand{\vE}{{\bf E}}
\newcommand{\vu}{{\bf u}}
\newcommand{\vv}{{\bf v}}
\newcommand{\vw}{{\bf w}}
\newcommand{\vx}{{\bf x}}
\newcommand{\vy}{{\bf y}}
\newcommand{\vz}{{\bf z}}
\newcommand{\vA}{{\bf A}}
\newcommand{\vB}{{\bf B}}
\newcommand{\vH}{{\bf H}}
\newcommand{\vR}{{\bf R}}
\newcommand{\vU}{{\bf U}}
\newcommand{\vX}{{\bf X}}
\newcommand{\vY}{{\bf Y}}
\newcommand{\vZ}{{\bf Z}}
\newcommand{\Fs}{\mathcal{F}}
\newcommand{\vepsilon}{\mbox{\boldmath $\epsilon$}}
\newcommand{\vDelta}{\mbox{\boldmath $\Delta$}}
\newcommand{\vtheta}{\mbox{\boldmath $\theta$}}
\newcommand{\veta}{\mbox{\boldmath $\eta$}}
\newcommand{\Bin}{Bin}
\newcommand{\argmin}{\qopname\relax m{arg\,min}}
\newcommand{\bqa}{\begin{eqnarray*}}
\newcommand{\eqa}{\end{eqnarray*}}
\newcommand{\bqan}{\begin{eqnarray}}
\newcommand{\eqan}{\end{eqnarray}}
\begin{document}
\title{A Cluster Elastic Net for Multivariate Regression}
\author{\name Bradley S. Price \email brad.price@mail.wvu.edu \\
       \addr College of Business and Economics\\
       West Virginia University\\
       Morgantown, WV 26505, USA
       \AND
       \name Ben Sherwood \email ben.sherwood@ku.edu\\
       \addr School of Business\\ 
       University of Kansas\\
       Lawrence, KS 66045, USA}

\editor{ }

\maketitle

\begin{abstract}%

 
We propose a method for simultaneously estimating regression coefficients and clustering response variables in a multivariate regression model, to increase prediction accuracy and give insights into the relationship between response variables. The estimates of the regression coefficients and clusters are found by using a penalized likelihood estimator, which includes a cluster fusion penalty, to shrink the difference in fitted values from responses in the same cluster, and an $L_1$ penalty for simultaneous variable selection and estimation.  We propose a two-step algorithm, that iterates between k-means clustering and solving the penalized likelihood function assuming the clusters are known, which has desirable parallel computational properties obtained by using the cluster fusion penalty.  If the response variable clusters are known \emph{a priori} then the algorithm reduces to just solving the penalized likelihood problem.  Theoretical results are presented for the penalized least squares case, including asymptotic results allowing for $p \gg n$. We extend our method to the setting where the responses are binomial variables. We propose a coordinate descent algorithm for the normal likelihood and a proximal gradient descent algorithm for the binomial likelihood, which can easily be extended to other generalized linear model (GLM) settings. Simulations and data examples from business operations and genomics are presented to show the merits of both the least squares and binomial methods.  
 
\end{abstract}
 
\begin{keywords}
Multivariate Regression, Clustering, Fusion Penalty
\end{keywords}
  
\section{Introduction}
In this article we consider the pair 
$(\vx_i,\vy_i)_{i=1}^n$, with $\vx^T_i=(x_{i1},\ldots, x_{ip}) \in \Real^{p}$ and
$\vy_i=(y_{i1},\ldots, y_{ir})^T \in \Real^r$. Define $X = \left(\vx_1,\ldots,\vx_n\right)^T \in \Real^{n \times p}$ and $Y = \left(\vy_1,\ldots,\vy_n\right)^T \in \Real^{n \times r}$. We initially assume the linear model 

\begin{equation}
\label{lin_model}
\vy_i= {B^*}^T\vx_i+\vepsilon_i,
\end{equation}
where  $\vepsilon_i = (\epsilon_{i1},\ldots,\epsilon_{ir})^T \in \Real^r$ are realizations of an
 i.i.d. random variable with mean zero and covariance matrix $\Sigma$, $B^* = \left( \vbeta^*_1, \ldots, \vbeta^*_r\right) \in \Real^{p \times r}$ and $\vbeta_k^* = (\beta_{1k}^*,\ldots,\beta_{pk}^*)^T \in \Real^{p}$. We will refer to the matrix of error as $E = (\vepsilon_1,\ldots,\vepsilon_n)^T \in \Real^{n \times r}$.  Under mild assumptions a consistent estimator of $\vbeta_k^*$ is the ordinary least squares (OLS) estimator of 
\begin{equation*}
\tilde{\vbeta}_k =  \underset{\vbeta_k}{\mbox{argmin}} \sum_{i=1}^n (y_{ik} - \vx_i^T\vbeta_k)^2.
\end{equation*}
If  $\vepsilon_i$ are i.i.d. and $\vepsilon_i \sim N(\mathbf{0}_r, \Sigma)$ 
the estimator $\tilde{\vbeta}_k$ is the MLE. This estimator does not use the other responses, ignoring potentially useful information. 

Throughout this paper for a vector $\va$ define $||\va||_q$ as the $L_q$ norm and for a matrix $A$ we define $||A||_q$ as the entrywise $L_q$ norm. If there is \emph{a priori} information that the fitted values of response $k$ and $m$ should be close then we could impose a penalty on the difference in the fitted values and consider the estimators 
\begin{equation}
\label{group_ext}
(\tilde{\vbeta}_k,\tilde{\vbeta}_m) = \underset{\vbeta_k,\vbeta_m}{\mbox{argmin}} \sum_{i=1}^n \left\{ (y_{ik} - \vx_i^T\vbeta_k)^2 + (y_{im} - \vx_i^T\vbeta_m)^2\right\} + \frac{\gamma}{n} ||X(\vbeta_k-\vbeta_m)||_2^2,
\end{equation} 
where $\gamma$ is a tuning parameter controlling the amount of agreement between the two fitted values vectors. We propose an objective function that generalizes \eqref{group_ext} for multiple responses from multiple clusters that may not be known \emph{a priori}.  The proposed objective function also includes an $L_1$ penalty for simultaneous estimation and variable selection, which allows our method to be used to increase prediction accuracy, select relevant variables for each response, and detect groupings of response variables without assuming or estimating a covariance structure.  In our theory, simulations, and applied examples we consider cases where $p \gg n$. We extend the proposed method to the generalized linear model framework, specifically focusing on multiple binary responses. This extension allows the method to be used in many different contexts, such as understanding co-morbidities related to patient information recorded in electronic medical records, or product level purchasing habits of customers based on information obtained from a loyalty program. 
We propose a coordinate descent algorithm for the least squares case and proximal coordinate descent algorithm for the binomial GLM case, which provides a general framework for extending the method to other GLM or M-estimator settings. 
 
%
%
%
%
%

Our work has been influenced by previous work in estimating high dimensional models. When $\frac{1}{n}X'X = I_{p}$ the penalty function is equivalent to a ridge penalty \citep{ridge} on the difference of the coefficient vectors for the two responses. We add the $L_1$ penalty as proposed in \citet{lasso} to do simultaneous variable selection and estimation. Similar to the work of \citet{zou05} we combine the ridge and $L_1$ penalties. The proposed estimator simultaneously estimates clusters of the response and fuses the fitted values of the clustered responses. Previous work has been done on clustering covariates for high dimensional regression with a univariate response. This work is most similar to the work of \citet{witten14} who proposed the cluster elastic net (CEN) that simultaneously estimates clusters of covariates and fuses the effects of covariates within the same cluster. Our proposed method is also similar to Grace estimators proposed in \citet{li_08} and \citet{li_10}, which use regularization based on external network information to minimize the difference of coefficients for related predictors and use a lasso penalty for sparsity. \citet{huang_11} proposed the sparse Laplacian shrinkage method, which preforms variable selection and promotes similarities among coefficients of 
correlated covariates. \citet{zhao_16} proposed the Grace Test, a testing framework for Grace estimators, that allows for some uncertainty in the graph and showed that if the external graph is informative it increases  the power of the Grace test. \citet{buhlmanCluster} proposed two different penalized methods for clustered covariates in high-dimensional regression: cluster representative lasso (CRL) and cluster group lasso (CGL). In CRL the covariates are clustered, dimension reduction is done by replacing the original covariates with the cluster centers and a lasso model is fit using the cluster centers as covariates. In CGL the group penalty of \citet{yuan2007} is applied using the previously found clusters as the groups. \citet{bird} demonstrated that averaging over models using different cluster centers for both responses and predictors can improve prediction accuracy of DNase I hypersensitivity using gene expression data. \citet{kim_09} proposed graph-guided fused lasso (GGFL) to the specific problem of association analysis to quantitative trait networks. GGFL presents a fused lasso framework in multivariate regression that leverages correlated traits based on a network structure.  Our work is related to the fused lasso literature as well, though we do not achieve exact fusion \citep{tibs05, rinaldo2009, hoefling2010, tibs_14}. The proposed method differs from the works mentioned in this setting because it focuses on using correlation between the response variables to improve estimation, however all of the works mentioned were instrumental in helping us derive our final estimator. 
 

The idea of using information from different responses to improve estimation in multivariate regression is not new and our work builds upon previous works in this area. \citet{breiman97} introduced the Curds and Whey method whose predictions are an optimal linear combination of least squares predictions. \citet{rothman2010} proposed multivariate regression with covariance estimation (MRCE), which is a penalized likelihood approach to simultaneously estimate the regression coefficients and the inverse covariance matrix of the errors. MRCE leverages correlation in unexplained variation to improve estimation, while our proposed method leverages correlation in explained variation to improve estimation. Other estimators assume both the response and covariates are multivariate normal and exploit this structure to derive estimators \citep{lee2012, molstad}. \citet{rai_12} proposed a penalized likelihood method for multivariate regression that simultaneously
estimates regression coefficients, the inverse covariance matrix of the errors, and the covariance matrix of the regression coefficients across responses using lasso type penalties. 
\citet{peng10} introduced regularized multivariate regression for identifying master predictors (remMap), which relies on \emph{a priori} information about valuable predictors and imposes a group $L_1$ and $L_2$ norm, across responses, on all covariates not prespecified as being useful predictors.  \citet{kim_12} proposed the tree guided group lasso, which uses an \emph{a priori} hierarchical clustering of the responses 
to define overlapping group lasso penalties for the multivariate regression model.  They propose a weighting method that ensures all coefficients are penalized equally, while using the hierarchical structure to impose a similar sparsity structure across highly correlated responses. 

Another approach to improving efficiency is by doing dimension reduction on $Y$ to find a smaller subspace that retains the material information needed for estimation of the regression coefficients \citep{cook2010, cook2015, sprem}. \citet{cook2010} introduced the envelope estimator for the multivariate linear model, which projects the maximum likelihood estimator onto the estimated subspace with the material information. \cite{cook2015} provided envelope models for GLMs and weighted least squares. \cite{sprem} proposed a sparse regression model (SPReM) for estimating models where $r$ is very large. SPReM projects the response variables into a lower-dimensional space while maintaining the structure needed for a specific hypothesis test. The key difference between our proposed method and these approaches is that we are interested in simultaneously estimating clustering of the response variables and fusing the fitted values from responses within the same cluster.

The proposed method simultaneously estimates clusters of the response and coefficients. Changes in cluster groups are discrete changes and as a result our objective function is discontinuous, similar to k-means clustering, thus making it difficult to derive an efficient algorithm that will find the optimal estimates for coefficients and groups. \citet{witten14} dealt with a similar difficulty for the CEN estimator, but noticed that if the groups are fixed then the problem is convex, while if the regression coefficients are fixed the problem becomes a k-means clustering problem. We modify the approach proposed in \citet{witten14} to our problem of grouping responses and extend the approach to the case of generalized linear models, specifically the binomial logistic model.  
In our theoretical results we assume the clustering groups are known, but the problem remains challenging as we are dealing with multiple responses, allow for $p \gg n$ and for $p$ to increase with $n$. 

In Section 2 we present our method for the multivariate linear regression model and provide theoretical results, including consistency of our estimator, to better understand the basic properties of the penalized likelihood solution.  In Section 3 we provide details on the two-step iterative algorithm and show estimating the regression coefficients for the different clusters is an embarrassingly parallel problem, which is a property of our cluster fusion penalty that 
fuses within group fitted values. 
This avoids issues that would arise in fusing all possible combinations of regression coefficients, or having to specify a fusion set \emph{a priori}. Examples of the issues that can arise can be 
found in \cite{price17}, 
who discussed 
 the importance of choosing the fusion set, and the original fused lasso paper which fused only consecutive coefficients \citep{tibs05}.  In Section 4 we present the model for binomial responses along with an algorithm, demonstrating how the use of the cluster fusion penalty can exploit relationships of response variables beyond the traditional Gaussian problem. Simulations for both conditional Gaussian and binomial responses are presented in Section 5. The least squares version of our method is applied to model baby birth weight, placental weight and cotinine levels given maternal gene expression and demographic information. The binomial case is applied to model concession stand purchases using customer information as covariates. Both applied analysis are presented in Section 6. We conclude with a summary in Section 7. 
 

\section{Least Squares Model}
\subsection{Method}
First, we consider estimating \eqref{lin_model} when there are $Q$ unknown clusters of the $r$ responses. We further assume that $\sum_{i=1}^n y_{ik}=0$ for all $k=1,\ldots, r$, 
$\sum_{i=1}^n x_{ij}=0$ and $\sum_{i=1}^nx_{ij}^2 \leq n$ for all $j=1\ldots,p$. The model requires $rp$ parameters to be estimated for prediction, which is problematic when $r$ or $p$ are large. Let $D=(D_1,\ldots, D_Q)$ be a partition of the set $\{1,\ldots, r\}$. 
For a set $A$ define $|A|$ as the cardinality of that set. We propose the multivariate cluster elastic net (MCEN) estimator as
\begin{equation}
\label{opt}
\begin{split}
(\hat{B},\hat{D})= \argmin_{B \in \Real^{p \times r}, D_1,\ldots, D_Q} \frac{1}{2n}\sum_{i=1}^n\sum_{c=1}^r (y_{ic} -\vx_i^T\vbeta_c)^2 +\delta ||B||_1\\
+\frac{\gamma}{2n}\sum_{q=1}^Q\frac{1}{|D_q|}\sum_{l,m \in D_q} ||X(\vbeta_l-\vbeta_m)||_2^2,
\end{split}
\end{equation}
where $Q$ is the number of clusters and $\gamma$ and $\delta$ are non-negative user specified tuning parameters. In addition $Q$, the total number of clusters, can be considered a tuning parameter. The cluster fusion penalty, associated with tuning parameter $\gamma$, is used to exploit similarities in the fitted values. The lasso penalty, with tuning parameter $\delta$, is used to perform simultaneous estimation and variable selection. When $\gamma=0$ or $Q=r$, the optimization in \eqref{opt} reduces to $r$ independent lasso penalized least squares problems with tuning parameter $\delta$. If $\hat{D}$ is known then the optimization in \eqref{opt} can be split into $Q$ independent optimizations that are similar to the optimizations presented in \citet{li_08}, \citet{li_10}, and \citet{witten14} and can be solved in parallel. We exploit this computational feature in our algorithm, which is a result of using the cluster fusion penalty.   

The proposed method uses a combination of $L_1$ and $L_2$ penalties as proposed by \citet{zou05}. Similar methods have been proposed for grouping the effects of predictors with a univariate response such as CEN \citep{witten14} and Grace estimators \citep{li_08,li_10,zhao_16}. \citet{kim_12} proposed a method that uses a predetermined hierarchical clustering of the responses that provides an $L_1$ penalty for all coefficients and a group $L_2$ penalty for responses that are grouped together. \citet{chen_2016} proposed a method using conjoint clustering to 
incorporate similarities in preferences between individuals in conjoint analysis. 
This method does not simultaneously estimate coefficients and groupings.  It requires a two-step algorithm to estimate the number of clusters, and then estimates coefficients  
using  regularization based on  the estimated cluster. 
The proposed approach uses non-hierarchical clusters, allows for the clustering structure to be unknown before estimation of the coefficients and focuses more on imposing similar fitted values for grouped responses, compared to directly imposing a similar sparsity structure.   

Selecting the triplet, $(Q,\gamma,\delta)$, of tuning parameters can be done by K-fold cross validation minimizing the squared prediction error. 
Let $\Fs_k$ be the set of indices in the $k$th fold, $k \in\{1,\ldots, K\}$, and $\hat{\vbeta}^{(-\Fs_k)}_{c}(Q,\gamma,\delta)$ be the estimated regression coefficient vector using $Q$, $\gamma$ and $\delta$ for response $c$ produced from the training set with $\Fs_k$ removed. Then select the triplet, $(\hat{Q},\hat{\gamma},\hat{\delta})$, that minimizes 
\begin{equation}
V(Q,\delta,\gamma)= \sum_{k=1}^K\sum_{c=1}^r\sum_{i \in \Fs_k} \left\{y_{ic}-x_i^T\hat{\vbeta}^{(-\Fs_k)}_c(Q,\gamma,\delta)\right\}^2.
\end{equation}

\subsection{Theoretical Results}
\label{theory}

For theoretical discussions we assume that $D$ is known for some fixed value of $Q$. This is because for $D$ unknown the objective function in \eqref{opt} is discontinuous because of the discrete changes in groups, however if $D$ is known \eqref{opt} is a convex function. In this section we will look at properties of the MCEN estimator for the special case of fixed $n$ and $p$ with $\delta=0$. 
In addition, we present a consistency result that allows for $p \gg n$ when $\delta = o(1)$ and $\gamma=o(1)$. 

Thus, the first two theorems refer to the following estimator 

\begin{equation}
\label{opt_dfixed}
\begin{split}
\bar{B}= \argmin_{B \in \Real^{p \times r}} \frac{1}{2n}\sum_{i=1}^n\sum_{c=1}^r (y_{ic} -\vx_i^T\vbeta_c)^2 +\delta ||B||_1\\
+ \frac{\gamma}{2n}\sum_{q=1}^Q\frac{1}{|D_q|}\sum_{l,m \in D_q} ||X(\vbeta_l-\vbeta_m)||_2^2 .
\end{split}
\end{equation}

The estimator $\bar{B}$ does not simultaneously estimate the groups, it assumes they are known \emph{a priori}, and thus is different than $\hat{B}$. There are instances where the grouping structure is known before data analysis and thus using $\bar{B}$ would be preferable in practice. In addition $\bar{B}$ is a key component to the algorithm discussed in Section \ref{alg}. We begin by relating the estimator in \eqref{opt_dfixed} to ordinary least squares (OLS), for the special case of $\delta=0$. Removing the $L_1$ penalty allows us to derive a closed form for the estimator. 

\begin{theorem}
\label{thm1}
Assume $n>p$, $\delta=0$, and $Q$ and $\gamma$ are fixed values. Define $\dot{B}=(\dot{\vbeta}_1,\ldots,\dot{\vbeta}_r)$ to be the OLS estimates for the $r$ response variables and $\bar{B}=(\bar{\vbeta}_1,\ldots,\bar{\vbeta}_r)$ be the solution to \eqref{opt_dfixed} with tuning parameter $\gamma$.  Given $l \in D_q$ then 
$\bar{\vbeta}_l$ has the closed form solution of

\begin{equation}
\bar{\vbeta}_l=\dot{\vbeta}_l+\frac{2\gamma}{(1+2\gamma)|D_q|}\sum_{c \in D_q\\ c \neq l} (\dot{\vbeta}_c-\dot{\vbeta}_l).
\end{equation}
 
\end{theorem}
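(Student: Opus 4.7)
My plan is to derive the result directly from the first-order optimality conditions, exploiting the fact that with $\delta=0$ the objective in \eqref{opt_dfixed} is a smooth, strictly convex quadratic in $B$ (using $n>p$, so $X^TX$ is invertible), and that when the partition $D$ is fixed the problem decouples across clusters $D_1,\ldots,D_Q$. I can therefore fix a single cluster $D_q$ and work only with the coordinates $\{\vbeta_l : l\in D_q\}$.

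First I would write down $\partial/\partial\vbeta_l$ of the objective. The data-fit term contributes $\tfrac{1}{n}X^T(X\vbeta_l-\vy_l)$. The fusion term, for $l\in D_q$, contributes (being careful that $\vbeta_l$ appears in every pair $(l,m)$ and every pair $(m,l)$ with $m\in D_q$) a term proportional to $\tfrac{2\gamma}{n|D_q|}X^TX\sum_{m\in D_q}(\vbeta_l-\vbeta_m)$. Setting the gradient to zero and left-multiplying by $(X^TX)^{-1}$ (permissible since $n>p$ and $X$ has full column rank generically), the stationary condition collapses to
\begin{equation*}
(\vbeta_l-\dot{\vbeta}_l) \;+\; \frac{2\gamma}{|D_q|}\sum_{m\in D_q}(\vbeta_l-\vbeta_m)\;=\;\vnull,
\end{equation*}
where $\dot{\vbeta}_l=(X^TX)^{-1}X^T\vy_l$ is the OLS estimator. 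This is the key reduction: the penalty acts on the coefficients themselves (not on the fitted values) once we multiply through by $(X^TX)^{-1}$.

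Next I would introduce the within-cluster averages $\bar{\vbeta}_{D_q}=|D_q|^{-1}\sum_{l\in D_q}\vbeta_l$ and $\bar{\dot{\vbeta}}_{D_q}=|D_q|^{-1}\sum_{l\in D_q}\dot{\vbeta}_l$, so that the stationary condition becomes
\begin{equation*}
(1+2\gamma)\,\vbeta_l \;=\; \dot{\vbeta}_l \;+\; 2\gamma\,\bar{\vbeta}_{D_q}.
\end{equation*}
Summing this identity over $l\in D_q$ and using cancellation on the left gives $(1+2\gamma)|D_q|\bar{\vbeta}_{D_q}=|D_q|\bar{\dot{\vbeta}}_{D_q}+2\gamma|D_q|\bar{\vbeta}_{D_q}$, hence the cluster mean is preserved: $\bar{\vbeta}_{D_q}=\bar{\dot{\vbeta}}_{D_q}$. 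Substituting back yields the explicit expression $\bar{\vbeta}_l=(1+2\gamma)^{-1}(\dot{\vbeta}_l+2\gamma\,\bar{\dot{\vbeta}}_{D_q})$, which after rewriting $\bar{\dot{\vbeta}}_{D_q}=\dot{\vbeta}_l+|D_q|^{-1}\sum_{c\in D_q,\,c\neq l}(\dot{\vbeta}_c-\dot{\vbeta}_l)$ is exactly the closed form stated in the theorem.

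The main obstacle, and the only step that requires care, is bookkeeping the gradient of the double sum $\sum_{l,m\in D_q}\|X(\vbeta_l-\vbeta_m)\|_2^2$: each $\vbeta_l$ appears as both the first and second argument, so the factor of $4$ (rather than $2$) in the differentiation must be tracked in order for the coefficient $2\gamma/(1+2\gamma)$ in the final formula to come out correctly. Everything else is linear algebra and averaging, and uniqueness of $\bar{\vbeta}_l$ is guaranteed by strict convexity of the quadratic objective when $X^TX\succ 0$.
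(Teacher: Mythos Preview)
Your proposal is correct and follows essentially the same route as the paper: both arguments set the gradient of the smooth quadratic to zero, cancel the common factor $X^TX$ (using $n>p$), and then solve the resulting linear system in $\{\bar{\vbeta}_l : l\in D_q\}$. The only difference is in how that linear system is solved. The paper subtracts the stationary equations for two indices $l,m\in D_q$ to obtain $(1+2\gamma)(\bar{\vbeta}_l-\bar{\vbeta}_m)=\dot{\vbeta}_l-\dot{\vbeta}_m$, and then substitutes this pairwise relation back into the equation for $\bar{\vbeta}_l$. You instead sum the stationary equations over $l\in D_q$ to show that the within-cluster mean is invariant, $\bar{\vbeta}_{D_q}=\bar{\dot{\vbeta}}_{D_q}$, and then substitute that mean back. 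Both tricks are equivalent linear-algebra manipulations; your averaging route has the minor bonus of making explicit the structural fact that the fusion penalty does not move the cluster centroid, while the paper's pairwise route makes explicit that the penalty shrinks each pairwise difference by the uniform factor $(1+2\gamma)^{-1}$.
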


Theorem \ref{thm1} provides 
some intuition about the MCEN estimator.  As $\gamma$ increases the MCEN estimator approaches a weighted average of the OLS coefficients within a cluster. In addition the results from Theorem \ref{thm1} can be used to calculate the bias and variance of $\bar{B}$, which are needed for proving Theorem \ref{thm2}. The proof of Theorem \ref{thm1} and the following Theorems can be found in the appendix.




\begin{theorem}
\label{thm2}
Assume $E(\epsilon_{ic}^2)=1$ for all $i \in \{1,\ldots,n\}$ and $c \in \{1,\ldots,r\}$ and $E(\epsilon_{ic}\epsilon_{ik}) = \rho$ for $c \neq k$, where $\rho \in (0,1)$. Set $\delta=0$, then for a fixed $n$ and $p$ where $n>p$ there exists a positive $\gamma$ such that 
\begin{equation}
E\left(\left|\left|\bar{B}-B^*\right|\right|_2^2\right)\leq E\left(\left|\left|\dot{B}-B^*\right|\right|_2^2\right),
\end{equation}
where $B^*$ are the true regression coefficients, $\dot{B}$ is as defined in Theorem \ref{thm1} and $\bar{B}$ is as defined in \eqref{opt_dfixed}.  
\end{theorem}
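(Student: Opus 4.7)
The plan is to view the MSE of $\bar{B}$ as a smooth function of $\gamma$ and show its derivative at $\gamma=0$ is strictly negative (assuming at least one cluster has more than one response, otherwise the estimator coincides with OLS and the result holds trivially with equality). At $\gamma=0$, the closed form from Theorem \ref{thm1} gives $\bar{B}=\dot{B}$, so $E\|\bar{B}-B^*\|_2^2$ equals the OLS risk at $\gamma=0$. A strictly negative derivative at this point then produces a small positive $\gamma$ where the MCEN risk is strictly less than the OLS risk by continuity.

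The first step is to rewrite $\bar{\vbeta}_l$ from Theorem \ref{thm1} as a convex combination of within-cluster OLS estimators: for $l\in D_q$, set $a_q=\frac{2\gamma}{(1+2\gamma)|D_q|}$ and $w_{ll}=1-a_q(|D_q|-1)$, $w_{lc}=a_q$ for $c\in D_q\setminus\{l\}$, so that $\bar{\vbeta}_l=\sum_{c\in D_q}w_{lc}\dot{\vbeta}_c$ with $\sum_c w_{lc}=1$. This makes the bias and variance tractable: the bias is $a_q\sum_{c\in D_q\setminus\{l\}}(\vbeta_c^*-\vbeta_l^*)$, so the squared bias is $O(a_q^2)=O(\gamma^2)$ near $\gamma=0$ and contributes zero to the derivative of the MSE at $\gamma=0$.

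The second step is the variance calculation. Under the stated moment assumptions, a direct computation using $\dot{\vbeta}_k-\vbeta_k^*=(X^TX)^{-1}X^T\vepsilon_{\cdot k}$ gives $\text{Cov}(\dot{\vbeta}_k,\dot{\vbeta}_m)=\sigma_{km}(X^TX)^{-1}$ with $\sigma_{kk}=1$ and $\sigma_{km}=\rho$ for $k\neq m$. Because $\sum_c w_{lc}=1$, the trace of $\text{Var}(\bar{\vbeta}_l)$ simplifies to $\bigl[(1-\rho)\sum_c w_{lc}^2+\rho\bigr]\,\text{tr}((X^TX)^{-1})$. Using $\sum_c w_{lc}^2=(1-a_q m)^2+ma_q^2$ with $m=|D_q|-1$, and $\frac{da_q}{d\gamma}\big|_{\gamma=0}=\frac{2}{|D_q|}$, differentiation at $\gamma=0$ yields
\begin{equation*}
\frac{d}{d\gamma}\,\text{tr}(\text{Var}(\bar{\vbeta}_l))\Big|_{\gamma=0}=-\frac{4(|D_q|-1)(1-\rho)}{|D_q|}\,\text{tr}((X^TX)^{-1}).
\end{equation*}
Summing over $l\in D_q$ and over clusters gives a total MSE derivative at $\gamma=0$ proportional to $-4(1-\rho)\,\text{tr}((X^TX)^{-1})\sum_q(|D_q|-1)$, which is strictly negative whenever $\rho<1$ and at least one $|D_q|>1$.

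The final step is to combine: the bias contribution has zero first-order effect at $\gamma=0$ while the variance contribution has a strictly negative first-order effect, so $\gamma\mapsto E\|\bar{B}-B^*\|_2^2$ is strictly decreasing on some interval $(0,\gamma_0)$, and any $\gamma\in(0,\gamma_0)$ satisfies the claim. The main obstacle is the algebraic bookkeeping in forming $\sum_c w_{lc}^2$ and correctly factoring out $\text{tr}((X^TX)^{-1})$ from the within-cluster variance sums; once this is carried out, the sign of the derivative follows immediately from $\rho\in(0,1)$ and $|D_q|>1$.
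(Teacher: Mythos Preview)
Your argument is correct and reaches the same conclusion, but by a somewhat different route than the paper. The paper computes the full variance and squared bias of $\bar{\vbeta}_v$ explicitly from Theorem~\ref{thm1}, writes out the inequality $\mathrm{MSE}(\bar{\vbeta}_v)<\mathrm{MSE}(\dot{\vbeta}_v)$ as a linear inequality in $\gamma$, and observes that its constant term $\omega|D_q|(|D_q|-1)(1-\rho)$ is positive, so the inequality holds on an explicit interval $(0,\gamma_0(v))$; taking the minimum over $v$ finishes. You instead differentiate the risk at $\gamma=0$: the squared bias is $O(\gamma^2)$ and hence contributes nothing to first order, while the variance contribution has derivative $-\tfrac{4(|D_q|-1)(1-\rho)}{|D_q|}\,\mathrm{tr}((X^TX)^{-1})<0$. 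Your convex-combination formulation with the identity $\mathrm{tr}(\mathrm{Var}(\bar{\vbeta}_l))=\bigl[(1-\rho)\sum_c w_{lc}^2+\rho\bigr]\,\mathrm{tr}((X^TX)^{-1})$ is cleaner and avoids the heavier algebra in the paper's full variance expression; on the other hand, the paper's direct approach yields an explicit admissible range for $\gamma$ rather than mere existence. Both arguments ultimately rest on the same fact---that the positive constant term $\omega|D_q|(|D_q|-1)(1-\rho)$ in the paper is, up to scaling, exactly the negative of your first derivative.
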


Similar to ridge regression Theorem \ref{thm2} shows that for some positive $\gamma$ the estimator from \eqref{opt_dfixed} has a smaller mean squared error than OLS. Note, we are not assuming that for $l,s \in D_m$ that $\vbeta_l^*=\vbeta_m^*$ and unless this condition holds the estimator $\bar{B}$ is biased. Thus, there exists a value of $\gamma$ for which there is a favorable bias-variance trade off. 

Next we examine the asymptotic performance of the estimator with the $L_1$ penalty. At times it will be easier to refer to a vectorized version of a matrix and for any matrix $A \in \Real^{a \times b}$, $\mbox{vec}(A) \in \Real^{ab}$. Where $\mbox{vec}(A)$ is the vector formed by stacking the columns of $A$. 
Define $S$ as the set of active predictors. That is, S is a subset of $\{1,\ldots,rp\}$ where $m \in S$ if $\mbox{vec}(B^*)_m \neq 0$. The subspace for the active predictors is 
\begin{equation*}
\mathcal{M}(S) \equiv \{ \vtheta \in \Real^{pr} | \theta_j = 0 \mbox{ if } j \notin S \}.
\end{equation*} 
The parameter space will be separated using projections of vectors into orthogonal complements. We define a projection of a vector $\vu$ into space $\mathcal{M}(S)$ as  
\begin{equation*}
\vu_{\mathcal{M}(S)} \equiv \underset{\vv \in \mathcal{M}(S)}{\mbox{arg min}} ||\vu-\vv||_2.
\end{equation*}
The orthogonal complement of space $\mathcal{M}(S) \subseteq \Real^{p}$ is 
\begin{equation*}
\mathcal{M}^\perp(S) \equiv \{ \vv \in \Real^{pr} | \langle \vu, \vv \rangle =0 \mbox{ for all } \vu \in \mathcal{M}(S) \}.
\end{equation*}
The following set is central to our proof of consistency,
\begin{equation*}
\mathcal{C} \equiv \{ \vtheta \in \Real^{pr} \vert \, ||\vtheta_{\mathcal{M}^\perp(S)}||_1 \leq ||\vtheta_{\mathcal{M}}||_1 \}.
\end{equation*}

For our proof of the consistency of $\bar{B}$ we make the following six assumptions:
\begin{enumerate}
\item[A1] Define $\vX_j$ to be the $j$th column vector of $X$, then $\vX_j \in \Real^p$ has the condition that $\frac{||\vX_j||_2^2}{n}\leq 1$.
\item[A2] Define $\vepsilon_c = (\epsilon_{1c},\ldots,\epsilon_{nc})^T \in \Real^n$ as the error vector for response $c$. The error vector $\vepsilon_c$ has a mean of zero and sub-Gaussian tails for all $c \in \{1,\ldots,r\}$. That is, there exists a constant $\sigma_c$ such that for any $\va \in \Real^n$, with $||\va||_2 = 1$, 
\begin{equation*}
P(|\langle \vepsilon_c, \va \rangle| > t) \leq 2 \mbox{exp} \left( - \frac{t^2}{2\sigma_c^2} \right).
\end{equation*}
Define $\sigma = \underset{c}{\max} \, \sigma_c$. 
\item[A3] 
Define $\tilde{X} = I_r \otimes X \in \Real^{rn \times rp}$, where $\otimes$ is the standard Kronecker product. There exists a positive constant $\kappa$ such that 
\begin{equation*}
\kappa ||\vtheta||_2^2 \leq \underset{\vtheta \in \mathcal{C}}{\mbox{min}} \, n^{-1}||\tilde{X}\vtheta||_2^2.
\end{equation*}
\item[A4] There exists a positive constant $\acute{b}$ such that $\max_{q=1,\ldots,Q}\max_{(l,k) \in D_q} ||\vbeta_l^*-\vbeta_k||_2 \leq \acute{b}$.
\item[A5]  Given $l,k \in D_q$, if $\beta^*_{lj}=0$ then $\beta^*_{kj}=0$, for all $j \in \{1,\ldots,p\}$ and $q \in \{1,\ldots, Q\}$.  
\item[A6] Define $\rho_{\max}(A)$ as the maximum eigenvalue of square matrix $A$ and $X_{S_{D_q}}$ as the matrix of true predictors for cluster $q$, where the $j$th predictor is a true predictor if $\vbeta^*_{lj} \neq 0$ for any $l \in D_q$. There exists a positive constant $\rho_{\max}$ such that  
$$
\max_{q=1,\ldots,q} \rho_{\max}\left(\frac{1}{n}X_{S_{D_q}}^TX_{S_{D_q}}\right)\leq \rho_{\max}.
$$
\end{enumerate}
Assumption A1 is a standard assumption for lasso-type penalties and can be achieved by appropriately scaling the covariates, which is commonly done in penalized regression. Assumption A2 is a generalization of the sub-Gaussian error assumption for penalized regression for a univariate response. Assumption A1 could be relaxed to allow for certain unbounded covariates, but then A2 would be replaced by assuming the errors are normally distributed \citep{dantzig, lassoTypeRecovery}. Assumption A3 is a generalization of the common restricted eigenvalue assumption. Motivation for assumption A3 is discussed in great detail by \citet{negahban2012} and a version for $r=1$ has been used in several works analyzing asymptotic behaviors of the lasso estimator \citep{bickel2009, oracleLasso, lassoTypeRecovery}. 
Assumptions A4 and A5 provide that the true coefficients are similar for responses in the same group. Assumption A5 provides that they have the same sparsity structure. While, assumption A4 ensures that the difference in the non-zero elements can be bounded by a finite constant, even if the number of predictors increases with $n$. Assumption A6 assumes the maximum eigenvalues of the sample covariance of the true predictors are bounded, a common assumption in high-dimensional work. 
Assumptions A4-A6 can be replaced by an assumption similar to assumption A2 from \citet{witten14} that if $b,c \in D_m$ then $\vbeta^*_b=\vbeta^*_c$ , for all $m \in \{1,\ldots, Q\}$, thus the bias of the MCEN estimator only comes from the $L_1$ penalty.



Using assumptions A3 and A5 we can provide a closed form definition of the asymptotic bias when $\delta=0$. This relationship will be central to our proof of consistency of $\bar{B}$.

\begin{corollary}
\label{cor1}
Let $B^*$ be an s-sparse matrix, whose column vectors are all sparse and $E[X^TX/n] \in \Real^{p \times p}$ to be a positive definite matrix. Assume $Q$ and $\gamma$  are fixed values.  Define,
$$
\acute{B}=\left(\acute{\beta}_1,\ldots\acute{\beta}_r\right)=\argmin_{\beta_1,\ldots,\beta_r \in \Real^p} E\left(\frac{1}{2n}\sum_{i=1}^n\sum_{c=1}^r (y_{ic} -\vx_i^T\vbeta_c)^2 \\
+ \frac{\gamma}{2n}\sum_{q=1}^Q\frac{1}{|D_q|}\sum_{l,m \in D_q} ||X(\vbeta_l-\vbeta_m)||_2^2\right),
$$
Assume $l \in D_q$ then $\acute{\beta}_l$ has closed form solution,
$$
\acute{\beta}_l=\beta^*_l+\frac{2\gamma}{(1+2\gamma)|D_q|)}\sum_{c \in D_q, c \neq l} (\beta^*_c-\beta^*_l).
$$
\end{corollary}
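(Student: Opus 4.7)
The plan is to reduce the problem to the deterministic calculation already carried out in the proof of Theorem \ref{thm1}. At the population level, both the squared-error loss and the cluster fusion penalty collapse to quadratic forms in a common positive-definite matrix $\Sigma_X := E[X^TX/n]$, so the first-order conditions will be formally identical to those derived in Theorem \ref{thm1}, with the true coefficients $\beta^*_l$ taking the role played there by the OLS fits $\dot{\vbeta}_l$.

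First I would push the expectation inside the objective. Substituting $y_{ic} = \vx_i^T \beta^*_c + \epsilon_{ic}$ and using $E(\epsilon_{ic}\vx_i) = \vnull$, the squared-error contribution reduces to $(1/n)\sum_i E(y_{ic} - \vx_i^T \vbeta_c)^2 = (\vbeta_c - \beta^*_c)^T \Sigma_X (\vbeta_c - \beta^*_c)$ up to an additive constant independent of $B$. The fusion term similarly collapses to $(\vbeta_l - \vbeta_m)^T \Sigma_X (\vbeta_l - \vbeta_m)$. Since $\Sigma_X$ is positive definite by assumption, the resulting expected objective is a strictly convex quadratic in $\VEC(B)$, which both guarantees existence and uniqueness of $\acute{B}$ and permits characterizing it through its first-order conditions.

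Next I would differentiate with respect to $\vbeta_l$ for each $l \in D_q$. The squared-error piece contributes $\Sigma_X(\vbeta_l - \beta^*_l)$. For the fusion penalty, carefully tracking that each unordered pair $\{l,m\}$ enters the sum $\sum_{l,m \in D_q}$ through both orderings, the contribution becomes $\tfrac{2\gamma}{|D_q|}\,\Sigma_X \sum_{m \in D_q,\, m \neq l}(\vbeta_l - \vbeta_m)$. Setting the total gradient to zero and cancelling $\Sigma_X$ on the left (legitimate because $\Sigma_X$ is positive definite) yields, for each $l \in D_q$, the linear system
\begin{equation*}
(\acute{\beta}_l - \beta^*_l) + \frac{2\gamma}{|D_q|}\sum_{m \in D_q,\, m \neq l}(\acute{\beta}_l - \acute{\beta}_m) = \vnull.
\end{equation*}
This is exactly the system solved in the proof of Theorem \ref{thm1}, with $\dot{\vbeta}_l$ replaced by $\beta^*_l$. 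I would then replay that argument: summing over $l \in D_q$ gives the cluster-mean identity $\sum_{l \in D_q} \acute{\beta}_l = \sum_{l \in D_q} \beta^*_l$; substituting this back and isolating $\acute{\beta}_l$ delivers the claimed closed form.

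There is no genuine obstacle in this argument; the only point requiring care is the coefficient bookkeeping when differentiating the fusion penalty (the factor $2\gamma/|D_q|$ in the first-order condition), which is the same subtlety that appears in Theorem \ref{thm1}. Note that the $s$-sparsity hypothesis on $B^*$ plays no role in the present argument and is stated only because it will be invoked in the downstream consistency result that builds on this corollary.
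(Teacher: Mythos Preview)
Your proposal is correct and matches the paper's own treatment: the paper's proof of Corollary~\ref{cor1} consists of a single sentence stating that it is ``similar to the proof of Theorem~\ref{thm1} and only changes with respect to the expected loss rather than the observed loss,'' which is precisely the reduction you carry out. The only cosmetic difference is that in resolving the first-order system the paper's Theorem~\ref{thm1} proof subtracts the equations for two indices $l,m\in D_q$ to obtain $\bar{\vbeta}_m=\bar{\vbeta}_l+\tfrac{1}{1+2\gamma}(\dot{\vbeta}_m-\dot{\vbeta}_l)$ and then substitutes back, whereas you sum over $l\in D_q$ to extract the cluster-mean identity; both routes solve the same linear system and yield the same closed form.
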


Corollary \ref{cor1} provides insight into what $\bar{B}$ would converge to for a fixed $\gamma$. Knowing this exact relationship is used in our consistency proof because it allows us to understand the exact nature of the bias caused by the $L_2$ penalty and for $\gamma$ going to zero at a given rate we can show that the bias is asymptotically negligible.

\begin{theorem}
\label{thm_strong}
Let $B^*$ be an s-sparse matrix, whose column vectors are all sparse and $E[X^TX/n] \in \Real^{p \times p}$ to be a positive definite matrix. 
Given 
$\delta =  16\sigma\sqrt{\frac{\log(rp)}{n}}$, $\gamma \leq \frac{5}{4\rho_{\max}\acute{b}}\sigma \sqrt{\frac{\log(rp)}{n}}$ and assumptions A1-A6 hold then there exist constants $c_1$, $c_2$, $c_3$ and $c_4$ such that
\begin{equation}
\left|\left|\mbox{vec}\left({\bar{B}}-{B^*}\right)\right|\right|_2 \leq \sigma\sqrt{\frac{s\log(rp)}{n}}\left(\frac{c_3}{\kappa}+\frac{c_4}{\rho_{\max}}\right),
\end{equation}
with probability at least $1-c_1\exp(-c_2n\delta^2)$.
\end{theorem}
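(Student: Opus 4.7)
The plan is to treat MCEN as a regularized $M$-estimator and combine the Negahban--Ravikumar--Wainwright--Yu framework with the explicit population bias identified in Corollary \ref{cor1}. Vectorizing with $\vtheta^* := \mathrm{vec}(B^*)$, $\tilde{X} := I_r\otimes X$, and $\vepsilon := \mathrm{vec}(E)$, the squared loss becomes $\frac{1}{2n}\|\vy-\tilde{X}\mathrm{vec}(B)\|_2^2$ and the fusion penalty becomes a PSD quadratic form. The final bound will be split via the triangle inequality $\|\bar{B}-B^*\|_2 \leq \|\bar{B}-\acute{B}\|_2 + \|\acute{B}-B^*\|_2$, where $\acute{B}$ is the population target of Corollary \ref{cor1}; by A5, the closed form in that corollary forces $\acute{B}$ to inherit the support $S$ of $B^*$.

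The bias piece is immediate from Corollary \ref{cor1}. For $l \in D_q$, the closed form combined with A4 gives $\|\acute{\vbeta}_l - \vbeta_l^*\|_2 \leq 2\gamma\acute{b}/(1+2\gamma) \leq 2\gamma\acute{b}$, and A5 keeps each column's deviation supported on $S$. Aggregating over the $r$ responses and using $r \leq s$ (which holds as soon as every response has at least one active predictor) gives $\|\acute{B}-B^*\|_2 \leq 2\gamma\acute{b}\sqrt{s}$. Substituting $\gamma \leq \frac{5\sigma}{4\rho_{\max}\acute{b}}\sqrt{\log(rp)/n}$ produces the $(c_4/\rho_{\max})\sigma\sqrt{s\log(rp)/n}$ contribution.

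For the estimation-error piece $\|\bar{B}-\acute{B}\|_2$, optimality of $\bar{B}$ combined with exact quadratic Taylor expansions of the loss and fusion penalty (both are quadratic), then dropping the nonnegative Hessian term, gives a basic inequality of the form
$$\frac{1}{2n}\|\tilde{X}\vDelta\|_2^2 \leq \langle \vz,\,\vDelta\rangle + \delta\bigl(\|\acute{B}\|_1-\|\bar{B}\|_1\bigr),$$
where $\vDelta := \mathrm{vec}(\bar{B}-\acute{B})$ and $\vz$ is an effective score consisting of the empirical noise $\frac{1}{n}\tilde{X}^T\vepsilon$ together with two deterministic terms linear in $\mathrm{vec}(\acute{B}-B^*)$. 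The noise is controlled via A1--A2 and a union bound over the $rp$ coordinates: $\|\frac{1}{n}\tilde{X}^T\vepsilon\|_\infty \leq 4\sigma\sqrt{\log(rp)/n} = \delta/4$ with probability at least $1-c_1\exp(-c_2 n\delta^2)$. The deterministic terms are bounded in $\ell_\infty$ by a constant times $\gamma\sqrt{\rho_{\max}}\acute{b}$ using A1, A4, A5 (so $\vbeta_l^*-\vbeta_m^*$ is supported on the cluster-$q$ true-predictor set), and A6 (giving $\|X_{S_{D_q}}(\vbeta_l^*-\vbeta_m^*)\|_2 \leq \sqrt{n\rho_{\max}}\,\acute{b}$); the stated ceiling on $\gamma$ makes them at most $\delta/4$. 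Since $\acute{B}$ is supported on $S$, the standard decomposability manipulation then forces $\|\vDelta_{S^c}\|_1 \leq 3\|\vDelta_S\|_1$, so A3 provides $n^{-1}\|\tilde{X}\vDelta\|_2^2 \geq \kappa\|\vDelta\|_2^2$. Combined with $\|\vDelta_S\|_1 \leq \sqrt{s}\,\|\vDelta\|_2$, this yields $\|\bar{B}-\acute{B}\|_2 \leq (c_3/\kappa)\sigma\sqrt{s\log(rp)/n}$, and the triangle inequality completes the bound.

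The main obstacle is the precise calibration of $\gamma$: it must be small enough that the fusion-related terms in the effective score are absorbed into $\delta/4$ (otherwise the cone condition $\|\vDelta_{S^c}\|_1 \leq 3\|\vDelta_S\|_1$ fails and A3 cannot be invoked), yet the residual bias $\|\acute{B}-B^*\|_2$ must still shrink at rate $\sigma\sqrt{s\log(rp)/n}/\rho_{\max}$. The cooperative role of A5 (pinning both the bias and the decomposability cone to the sparse support $S$), A4 (capping the within-cluster diameter by $\acute{b}$), and A6 (converting $\ell_2$ control of within-cluster differences into $\ell_\infty$ control of $\frac{1}{n}X^TX$-type products of order $\gamma\sqrt{\rho_{\max}}\acute{b}$) is what allows both requirements to be satisfied by the single upper bound on $\gamma$ stated in the theorem.
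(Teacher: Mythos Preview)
Your overall architecture is exactly the paper's: split via the triangle inequality around the population target $\acute{B}$ of Corollary \ref{cor1}, bound $\|\acute{B}-B^*\|_2$ directly from that closed form, and bound $\|\bar{B}-\acute{B}\|_2$ by verifying $\delta \geq 2\|\nabla\ell(\mathrm{vec}(\acute{B}))\|_\infty$ so that Negahban et al.'s framework plus restricted strong convexity (A3) apply. Two execution points, however, do not line up with the hypotheses as stated.

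First, the bias step. You bound each column by $2\gamma\acute{b}$ and then invoke $r\leq s$; but $r\leq s$ is not among A1--A6. The paper avoids this by using A5 to conclude that $\tilde{\vU}:=\mathrm{vec}(\acute{B}-B^*)/\{2\gamma/(1+2\gamma)\}$ is supported on $S$, and then bounding $\|\tilde{\vU}\|_2 \leq \sqrt{s}\,\|\tilde{\vU}\|_\infty \leq \sqrt{s}\,\acute{b}$, which needs no relation between $r$ and $s$.

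Second, and more consequential, the deterministic score terms. You control $\|\tfrac{1}{n}X^TX(\vbeta_l^*-\vbeta_m^*)\|_\infty$ by Cauchy--Schwarz together with $\|X_{S_{D_q}}v\|_2\leq \sqrt{n\rho_{\max}}\|v\|_2$, yielding order $\gamma\sqrt{\rho_{\max}}\,\acute{b}$. Plugging in the theorem's ceiling $\gamma\leq \tfrac{5}{4\rho_{\max}\acute{b}}\sigma\sqrt{\log(rp)/n}$ then gives a bound proportional to $\sigma\sqrt{\log(rp)/n}\,/\sqrt{\rho_{\max}}$, which is \emph{not} uniformly $\leq \delta/4$ (for small $\rho_{\max}$ it blows up), so the cone condition and hence A3 cannot be invoked in general. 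The paper instead uses A6 as an operator-norm bound on $\tfrac{1}{n}X_{S_{D_q}}^TX_{S_{D_q}}$ to get $\|\tfrac{1}{n}X^TX v\|_\infty \leq \rho_{\max}\|v\|_2$ for $v$ supported on $S_{D_q}$, producing deterministic terms of order $\gamma\rho_{\max}\acute{b}$; this cancels the $\rho_{\max}$ in the $\gamma$ ceiling exactly (the three terms sum to at most $8\gamma\rho_{\max}\acute{b}\leq \tfrac{5}{8}\delta$), leaving $\tfrac{3}{16}\delta$ to absorb the stochastic term. Swapping your $\sqrt{\rho_{\max}}$ step for this direct eigenvalue bound closes the gap.
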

The convergence rate derived is similar to rates found in lasso-type estimators with a univariate response, with $\log(rp)$ replacing $\log(p)$ to accommodate for the multiple responses \citep{bickel2009, dantzig, lassoTypeRecovery, negahban2012}. Thus, under the conditions of Theorem \ref{thm_strong} if $pr \rightarrow \infty$ then $||\mbox{vec}(\bar{B}-B^*)||_2 = O_p\left\{\sqrt{\frac{s \log(rp)}{n}}\right\}$. Our results 
prove consistency of our estimator when the group structure is known. \citet{zhao_16} propose the Grace test for an estimator with a similar penalty for grouping predictors with a univariate response and establish asymptotic results that allow for inference even if there is some uncertainty to the grouping structure.

\section{Algorithm}
\label{alg}

The optimization in \eqref{opt} is discontinuous because of the estimation of cluster assignments. 
To simplify the optimization we propose an iterative algorithm that alternates between 
estimating the groups with the regression coefficients fixed, and estimating the regression coefficients with the groups fixed.  
If the clusters are known \eqref{opt_dfixed} then it is a convex optimization problem that can be solved by a coordinate descent algorithm. Let $R=\frac{1}{n}X^TX$, define $\vR_j$ as the $j$th column of $R$. The super script $(-h)$ denotes the $h$th element of the vector has been removed, and $r_{jj}$ is $j$th diagonal element of $R$. Define $S(a,b) = \mbox{sign}(a)\max(0,|a|-b)$.  To solve \eqref{opt_dfixed}, we use a coordinate descent algorithm where each update is preformed by    
\begin{equation}
\label{MV_coordinate_descent}
\bar{\beta}_{jk} \leftarrow \frac{S\left[\frac{1}{n}y_{k}^T\vX_j-\left\{1+\frac{\gamma(\vert D_q \vert-1)}{\vert D_q \vert}\right\}\vR_j^{(-j)T}\overline{\vbeta}_k^{(-j)}+\frac{\gamma}{\vert D_q \vert}\sum_{s \in D_q, s \neq k}\vR_j^T\overline{\vbeta}_s, \delta/2\right]}{r_{jj}\left(1+\gamma\frac{|D_q|-1}{|D_q|}\right)}.
\end{equation}
Thus, \eqref{opt_dfixed} is solved by iterating through $j \in \{1,\ldots,p\}$ and $k \in \{1,\ldots,r\}$ until the solution converges, similar to other coordinate descent solutions \citep{witten14, li_10, li_08,friedman2008}. If $B$ is known then the solution to \eqref{opt} reduces to the well studied k-means clustering problem. Recognizing this, we propose a two-step iterative procedure to obtain a local minimum. To start the algorithm an initial estimate of $D$ or $B$ is needed. We propose initializing the regression coefficients for the different responses separately with the elastic net estimator of response $c$ of
\begin{equation}
\label{sen}
\hat{\vbeta}^1_c = \argmin_{\vbeta_c \in \Real^{p}} \frac{1}{2n}\sum_{i=1}^n (y_{ic} -\vx_i^T\vbeta_c)^2 +\delta ||\vbeta_c||_1 + \gamma ||\vbeta_c||_2^2,
\end{equation}   
where $\hat{B}^w = \left(\hat{\vbeta}_1^w,\ldots,\hat{\vbeta}_r^w\right)$ represents the $w$th iterative estimate of $B^*$. Given a fixed $(Q,\gamma,\delta)$ we propose the following algorithm.
\begin{enumerate}
\item Begin with initial estimates, $\hat{\vbeta}_{1}^1,\ldots, \hat{\vbeta}_{r}^1$. 
\item For the $w$th step, where $w > 1$, repeat the steps below until the group estimates do not change:
\begin{enumerate} 
\item Hold $\hat{B}^{w-1}$ fixed and minimize,
\label{step_2b_MVCEN}
\begin{equation}
\left( \hat{D}_1^w,\ldots, \hat{D}_Q^w\right)=\underset{D_1,...,D_Q}{\mbox{minimize}} \left\{\sum_{q=1}^Q \frac{1}{|D_q|} \sum_{l,m \in D_q} \left|\left|X\left(\hat{\vbeta}^{w-1}_l-\hat{\vbeta}^{w-1}_m\right)\right|\right|_2^2 \right\}.
\end{equation}
The above can be solved by performing $K$-means clustering on the $r$ $n-$dimensional vectors $X\hat{\vbeta}^{w-1}_1,\ldots,X\hat{\vbeta}^{w-1}_r$. 
\item   Holding $\hat{D}_1^w,\ldots,\hat{D}_Q^w$ fixed the $w$th estimate of $B^*$ is 
\begin{equation}
\label{step2b_opt}
\begin{split}
\hat{B}^w = \argmin_{B \in \Real^{p \times r}} \frac{1}{2n}\sum_{i=1}^n\sum_{c=1}^r (y_{ic} -\vx_i^T\vbeta_c)^2 +\delta ||B||_1\\
+ \frac{\gamma}{2n}\sum_{q=1}^Q\frac{1}{|\hat{D}_q^w|}\sum_{l,m \in \hat{D}_q^w} ||X(\vbeta_l-\vbeta_m)||_2^2 .
\end{split}
\end{equation}
Note that for the groups known, instead of estimated, $\hat{B}^w$ is equivalent to $\bar{B}$. Thus \eqref{step2b_opt} can be solved using the coordinate descent solution from \eqref{MV_coordinate_descent} using $\hat{B}^{w-1}$ as the initial estimates for the coordinate descent algorithm. 
\label{step_2c_MVCEN}
\end{enumerate}
\end{enumerate}

Convergence is reached once the groups at the $w$th and $(w-1)$th iteration are the same. The optimization in \eqref{opt_dfixed} is separable with respect to $\hat{D}$, and results in $Q$ independent optimization problems that can be solved in parallel. 
The algorithm for \eqref{opt_dfixed} can be solved in solution path type form where we iterate across different values of $\delta$ in a similar fashion as proposed in the glmnet algorithm \citep{friedman2008}. If all of the initial elastic net estimators are fully sparse, we set the solution to be a zero matrix and thus following 
\cite{friedman2008}, initialize the algorithm by beginning the sequence with $\delta_{\max}$ at
\begin{equation*}
\delta_{\max}=2 \max_{j,k} \left|\frac{\sum_{i=1}^n y_{ik}x_{ij}}{n}\right|.
\end{equation*}
Our two-step approach is closely related to the CEN algorithm proposed by 
\citet{witten14}, 
who proposed a two-step algorithm where the two steps are solved by coordinate descent and k-means algorithms. The major difference in our proposal is that we cluster the responses rather than the predictors, and have the ability to solve the optimization in parallel due to the nature of our regularization in a multiple response setting.
\section{Binomial Model}

\subsection{Method}
Next we extend the multivariate cluster elastic net to generalized linear models. We focus specifically on the binomial response case, 
but our discussion here will scale to other exponential families. A fusion penalty has been proposed for merging groups from a multinomial response  \citep{price17}, but our method differs as it aims to leverage association between multiple binomial responses.  \citet{kasap_16} proposed an ensemble method that combines association rule mining and binomial logistic regression via a multiple linear regression model.  Our method differs from this by simultaneously estimating the clusters of the response variables and estimating the regression coefficients.   
An example is $n$ customers, with $p$ covariates, such as demographic and historic purchasing variables, and $r$ indicators of product purchasing statuses for each customer. You could run $r$ independent models, but this would not allow for modeling the relationship between the different products. 
Extending 
the multivariate cluster elastic net 
to multiple binomial responses would allow us to group products by purchase probabilities to identify and use relationships between products.  This could also be used 
to create a probabilistic model for diseases based on patient demographic and medical information. 

For the linear model we ignore the intercept term as it can be removed by appropriately scaling $Y$ and $X$. This is not possible in logistic regression, therefore the model needs an intercept term. We define $\vu_i = (1,\vx_i^T)^T \in \Real^{p+1}$, $U = (\vu_1^T,\ldots,\vu_n)^T \in \Real^{n \times p+1}$, $\vU_k \in \Real^{n}$ as the kth column vector of $U$ and $\tilde{R} = U^TU$. The true coefficients for response $k$ is defined as $\vtheta_k^* \in \Real^{p+1}$, $\Theta^* = (\vtheta^*_1,\ldots,\vtheta^*_r) \in \Real^{p+1 \times r}$, $\Theta^*_{-1} \in \Real^{p \times r}$ is the matrix with the first row, the row of intercept coefficients, of $\Theta^*$ removed and $\vtheta^*_{(-1)k} \in \Real^{p}$ is the $k$th column vector of $\Theta^*_{-1}$. In this model $y_{ik}$ is an independent draw from 
\begin{equation}
\Bin\left(1,\pi^*_{ik}\right),
\end{equation}
where 
\begin{equation}
\label{logit}
\pi^*_{ik}=\frac{\exp(\vu_i^T\vtheta_k^*)}{1+\exp(\vu_i^T\vtheta_k^*)}.
\end{equation}
The penalized negative log-likelihood function is
\begin{equation}
\label{BinPen}
\begin{split}
& \sum_{k=1}^r \sum_{i=1}^n y_{ik}\vu_i^T\vtheta_k - \log\left\{1+\exp(\vu_i^T\vtheta_k)\right\} \\
& + \frac{\gamma}{2n}\sum_{q=1}^q \frac{1}{|D_q |} \sum_{l,m \in D_q} ||U(\vtheta_l-\vtheta_m)||_2^2 +\delta ||\Theta_{-1}||_1.
\end{split}
\end{equation}

\subsection{Algorithm}

We propose solving \eqref{BinPen} by approximating it with a penalized quadratic function similar to the glmnet algorithm proposed by \citet{friedman2008}.  Define,
\begin{equation}
g(\pi_{ik})=\log\left(\frac{\pi_{ik}}{1-\pi_{ik}}\right)=\vu_i^T\vtheta_k.
\end{equation}

To implement this approximation we define 
\begin{eqnarray}
z_{ik}&=&g(y_{ik})=g(\pi_{ik})+\frac{y_{ik}-\pi_{ik}}{\pi_{ik}(1-\pi_{ik})},\\
w_{ik}&=&\pi_{ik}(1-\pi_{ik}),\\
\label{quad_approx}
-l_{Ak}(\vtheta_k)&=&\sum_{i=1}^nw_{ik}(z_{ik}-\vu_i^T\vtheta_k)^2.
\end{eqnarray}
Note that $z_{ik}$ is just the first order Taylor approximation of $g(y_{ik})$, and that $w_{ik}$ is the conditional 
variance of $z_{ik}$ given $\vu_i$. Define $\vZ_k = (z_{1k},\ldots,z_{nk})^T \in \Real^n$ and $\vW = (w_{1k},\ldots,w_{nk})^T \in \Real^n$. 

The MCEN estimator for the binomial model is
\begin{equation}
\label{BinApprox}
\begin{split}
(\hat{\Theta},\hat{D})= \argmin_{\Theta \in \Real^{p+1 \times r}, D_1,\ldots, D_Q} &\sum_{k=1}^r -l_{Ak}(\vtheta_k)+ \delta || \Theta_{-1} ||_1\\
& +  \frac{\gamma}{2n}\sum_{q=1}^q \frac{1}{|D_q |} \sum_{l,m \in D_q} ||U(\vtheta_r-\vtheta_s)||_2^2.
\end{split}
\end{equation}

If the groups are known \emph{a priori} the solution is 
\begin{equation}
\label{BinApproxFixedGroups}
\begin{split}
\bar{\Theta}= \argmin_{\Theta \in \Real^{p+1 \times r}} &\sum_{k=1}^r -l_{Ak}(\vtheta_k)+ \delta || \Theta_{(-1)} ||_1\\
& +  \frac{\gamma}{2n}\sum_{q=1}^q \frac{1}{|D_q |} \sum_{l,m \in D_q} ||U(\vtheta_r-\vtheta_s)||_2^2.
\end{split}
\end{equation}
For same length vectors $\va$ and $\vb$ let $\va \circ \vb$ represent the component wise multiplication of the two vectors. To solve \eqref{BinApproxFixedGroups}, we use a proximal coordinate descent algorithm where each update is performed by    
\begin{equation}
\label{MVB_coordinate_descent}
\bar{\theta}_{jk} \leftarrow \frac{S\left\{(\vw_{k}\circ \vz_{k})^{T}\vU_j-M_{jk}, I(j\neq 0)\delta/2\right\}}{r_{jj}\gamma\frac{|D_q|-1}{n|D_q|}+\vU_j^T(\vw_k\circ \vU_j)},
\end{equation}
where
\begin{eqnarray*}
M_{jk}&=&\sum_{c=1, c\neq h}^p\vU_j^T(\vw_k\circ \vU_c)\bar{\Theta}_{cj}+\frac{\gamma(\vert D_q \vert-1)}{n\vert D_q \vert}\tilde{\vR}_j^{(-j)T}\bar{\vtheta}_k^{(-j)}\\
&-&\frac{\gamma}{n\vert D_q \vert}\sum_{s \in D_q, s \neq k}\tilde{\vR}_j^T\bar{\vtheta}_s.
\end{eqnarray*}
The final solution is found by iterating through $j \in \{1,\ldots,p\}$ and $k \in \{1,\ldots,r\}$ until convergence. Again this is a solution similar to the glmnet algorithm proposed by \citet{friedman2008}.

To solve \eqref{BinApprox}, we propose an algorithm that is similar in nature to the penalized
least squares solution proposed in Section \ref{alg}.  The main difference is that we solve \eqref{BinApprox} with $D_1,\ldots, D_Q$ fixed using an iteratively reweighed least squares (IRWLS) solution with a proximal coordinate descent algorithm. The initial estimator for each response is done separately with 
\begin{equation}
\label{glm_init}
\hat{\vtheta}^1_k = \argmin_{\vtheta_k \in \Real^{p+1}}  -l_{Ak}(\vtheta_k)+ \delta || \vtheta_{(-1)k} ||_1 + \gamma ||\vtheta_{(-1)k}||_2^2.
\end{equation}
The following is our proposed algorithm for estimating \eqref{BinApprox}. 
\begin{enumerate}
\item Begin with initial estimates of $\hat{\Theta}^1 = \left(\hat{\vtheta}^1_1,\ldots,\hat{\vtheta}^1_r\right) \in \Real^{p+1\times r}$.
\item For the $w$th step, where $w > 1$, repeat the steps below until the group estimates do not change:
\begin{enumerate} 
\item Hold $\hat{\Theta}^{w-1}$ fixed and minimize 
\label{step_2b_MVBCEN}
\begin{equation}
\begin{split}
\left(\hat{D}^w_1,\ldots, \hat{D}^w_Q\right)=\underset{D_1,...,D_Q}{\mbox{minimize}} \Biggl\{\sum_{q=1}^Q \frac{1}{|D_q|} \sum_{l,m \in D_q} & \Biggl|\Biggl|U\left(\vtheta^{w-1}_l-\vtheta^{w-1}_m\right) \Biggr|\Biggr|_2^2 \Biggr\}.
\end{split}
\end{equation}
The above can be solved by performing $K$-means clustering.

\item[(2b)]  Holding $\hat{D}_1^w,\ldots,\hat{D}_Q^w$ fixed the $w$th update for the coefficients is 
\begin{equation}
\label{glm_step2b}
\begin{split}
 \hat{\Theta}^w = \argmin_{\Theta \in \Real^{p+1 \times r}} &\sum_{k=1}^r -l_{Ak}(\vtheta_k)+ \delta ||  \Theta_{-1}||_1\\
& +  \frac{\gamma}{2n}\sum_{q=1}^q \frac{1}{|\hat{D}^w_q|} \sum_{l,m \in \hat{D}^w_q} ||U(\vtheta_r-\vtheta_s)||_2^2.
\end{split}
\end{equation}
Where \eqref{glm_step2b} can be solved using the proximal coordinate descent solution presented in \eqref{MVB_coordinate_descent}, using $\hat{\Theta}^{w-1}$ as the initial estimates for the proximal coordinate descent algorithm. 
\label{step_2c_MVBCEN}
\end{enumerate}
\end{enumerate}

The triplet $(Q,\gamma,\delta)$ can be selected using K-Fold cross validation maximizing the validation log-likelihood.  Let $\Fs_k$ be the set of indices in the $k$th fold $(k \in\{1,\ldots, K\})$ and $\hat{\pi}^{(-\Fs_k)}_{ic}(Q,\gamma,\delta)$ be the estimated probability for observation $i$ and response $c$ produced from the model with $\Fs_k$ removed using $Q$, $\gamma$ and $\delta$.  Specifically we select the triplet that maximizes
\begin{equation}
\label{glm_cv}
V(Q,\delta,\gamma)= \sum_{v=1}^K\sum_{c=1}^r\sum_{i \in \Fs_v}
\left[ y_{ic}\log\left\{\hat{\pi}^{(-\Fs_k)}_{ic}\right\}+(1-y_{ik})\log\left\{1-\hat{\pi}^{(-\Fs_k)}_{ic}\right\} \right].
\end{equation}
The quadratic approximation defined by
\eqref{quad_approx}, is a standard technique used to estimate
parameters in generalized linear models, making this framework and our algorithm 
scalable to other exponential family settings \citep{farawaybook}. Tuning parameter selection would then be done by 
updating \eqref{glm_cv} with the appropriate likelihood.    

\section{Simulations}

\subsection{Gaussian Simulations}
In this section we compare the performances of the MCEN estimator \eqref{opt}, the true MCEN (TMCEN) \eqref{opt_dfixed}, with clustering structure known \emph{a priori}, the separate elastic net (SEN) estimator \eqref{sen}, the joint elastic net (JEN) estimator  
\begin{equation}
\label{jen}
\hat{B}_{\mbox{JEN}} = \argmin_{B}
\frac{1}{2n}\sum_{k=1}^r\sum_{i=1}^n (y_{ik} -\vx_i^T\vbeta_k)^2 +\delta \sum_{j=1}^p \sqrt{\beta_{j1}^2+\ldots+\beta_{jr}^2} + \gamma \sum_{k=1}^r\sum_{j=1}^p \beta_{jk}^2,
\end{equation}
 and the tree-guided group lasso (TGL) \citep{kim_12}. Define $\vB_j \in \Real^{r}$ as the $j$th row vector of matrix $B$. Given a tree $T$ with vertices $V$, where each node $v \in V$ is associated with group $G_v$ define $B_{j}^{G_v}$ as a vector of the $j$th predictors from responses in group $G_v$. The TGL estimator is 
\begin{equation}
\label{tgl}
\hat{B}_{\mbox{TGL}} = \argmin_{B}
\frac{1}{2} \sum_{k=1}^r\sum_{i=1}^n (y_{ik} -\vx_i^T\vbeta_k)^2 +\delta \sum_{j=1}^p \sum_{v \in V} w_v ||B_j^{G_v}||_2,
\end{equation}
where $w_v$ are weights that can vary with the nodes. See \citet{kim_12} for a detailed presentation of TGL, including how the weights, $w_v$, are derived. 


The JEN and SEN models are fit using the \texttt{glmnet} package in R \citep{friedman2008}. Tuning parameters for all methods are selected using 10-folds cross validation. For the MCEN and TMCEN methods cluster sizes of 2, 3 and 4 are considered. We include the TMCEN estimator for two reasons. First, in practice the TMCEN estimator could be used if the practitioner has a predetermined clustering of the responses. Second, the TMCEN is useful as a benchmark to compare with the MCEN estimator because if the grouping of responses is useful TMCEN provides the optimal grouping.
In all of the simulations the sample size is 100 and the number of responses is 15. For the number of covariates we considered 12, 100 and 300. Next we define how the covariates are generated and then will present the generating process for the response variables. 

Define $\tilde{\Sigma} \in \Real^{12 \times 12}$ with entries $\tilde{\sigma}_{ii} = 1$ and $\tilde{\sigma}_{ij} = \rho$, for $i\neq j$. Let $0_{a,b} \in \Real^{a \times b}$ be a matrix with all entries equal to zero. The covariates are generated by $\vx_i \sim N(\mathbf{0}_p,\Sigma_x)$, where $\Sigma_x = \tilde{\Sigma}$ for $p=12$ and otherwise
\begin{equation*}
\Sigma_x = 
\left(
 \begin{array}{cc}
   \tilde{\Sigma} &  0_{p-12,p-12}\\
    0_{p-12,p-12} & I_{p-12}
 \end{array}
\right),
\end{equation*}
with $\rho =.7$. 

For a group of responses we define the grouped coefficients as $\vb_q(\eta,\lambda) = (\veta_q-\lambda,\veta_q^*,\veta_q+\lambda,\veta_q+2\lambda,\veta+3\lambda) \in \Real^{q \times 5}$, where $\lambda$ is a constant and $\veta_q \in \Real^q$ with each element equal to $\eta$.  In the case of $p=12$ the matrix of coefficients is 
\begin{equation*}
B_{\eta,\lambda}^*=\left\{\begin{array}{ccc}
\vb_4(\eta,\lambda) & \mathbf{0}_{4,5} & \mathbf{0}_{4,5} \\ 
\mathbf{0}_{4,5} & \vb_4(\eta,\lambda) & \mathbf{0}_{4,5} \\ 
\mathbf{0}_{4,5} & \mathbf{0}_{4,5} & \vb_4(\eta,\lambda)
\end{array} \right\},
\end{equation*}
otherwise
\begin{equation*}
B_{\eta,\lambda}^*=\left\{\begin{array}{ccc}
\vb_{10}(\eta,\lambda) & \mathbf{0}_{10,5} & \mathbf{0}_{10,5} \\ 
\mathbf{0}_{10,5} & \vb_{10}(\eta,\lambda) & \mathbf{0}_{10,5} \\ 
\mathbf{0}_{10,5} & \mathbf{0}_{10,5} & \vb_{10}(\eta,\lambda)\\
\mathbf{0}_{p-30,5} & \mathbf{0}_{p-30,5} & \mathbf{0}_{p-30,5}
\end{array} \right\}.
\end{equation*}

Define $\Sigma_\epsilon \in \Real^{15 \times 15}$ with $\sigma(\epsilon)_{ij}$ being the entry for the $i$th row and $j$th column of $\Sigma_\epsilon$. The generating process for the response is 
\begin{equation}
\vy_i  = {B^*_{\eta,\lambda}}^T\vx_i + \vepsilon_i,
\end{equation}
where $\vepsilon_i \sim N(\mathbf{0}_{15},\Sigma_\epsilon)$, $\sigma(\epsilon)_{ii} = 1$ and $\sigma(\epsilon)_{ij} = 0$, for $i$ not equal to $j$. In all simulations we set the sample size to 100, perform 50 replications and with $p$ set consider the following 9 different combinations for the true coefficient matrix, 
\begin{equation*}
(\eta,\lambda) \in \{0.25,0.5,0.75,1\} \times \{0.02, 0.05, 0.10\}.
\end{equation*}

Models are fit using the training data with a sample size of 100. The tree for TGL is defined by performing complete-linkage hierarchical clustering on the responses in the training data.  In addition we generate 1000 additional testing samples to assess the prediction accuracies of the models. Let $y^*_{ij}$ represent the $i$th training sample for the $j$th response and $\hat{y}_{ij}$ represent a predicted value of that sample and response. The average squared prediction error (ASPE) is defined as 
\begin{equation}
\frac{1}{15000} \sum_{i=1}^{1000} \sum_{j=1}^{15} \left( y_{ij}^* - \hat{y}_{ij}\right)^2.
\end{equation}
We also report the mean squared error (MSE) of the estimators where for an estimator $B$
\begin{equation}
\mbox{MSE}\left(B\right) = \sum_{j=1}^{15} \left|\left|\vbeta_j - \vbeta_j^*\right|\right|_2^2.
\end{equation}
In addition we report the number of true variables selected (TV), out of a maximum of 60 for $p=12$ and 150
 otherwise, and the number of false variables selected (FV). Box plots of the statistics for $p=300$ and the different combinations of $\eta$ and $\lambda$ are reported in Figures \ref{fig:mse_p_300}--\ref{fig:fv_p_300}. These results show that TMCEN generally outperforms all methods in terms of ASPE and MSE. The one exception being when $\eta=1$, particularly for larger values of $\lambda$, TGL is competitive with or outperforms TMCEN. For larger values of $\lambda$ we expect more bias in the MCEN and TMCEN solutions and our simulation setting is favorable to TGL because the sparsity structure is the same for responses in the same cluster. With regards to ASPE and MSE, MCEN generally does better than TGL when $\eta = .5 \mbox{ or } .75$. This suggests that the MCEN approach is advantageous with several smaller signals, but the signals need to be strong enough to correctly identify the clustering of the responses. The MCEN method also outperforms JEN and SEN in terms of ASPE and MSE, except in the case of $\eta=.25$ where JEN outperforms MCEN. In this case the signal is too small resulting in the MCEN method not finding the true clustering structure, and thus the grouping penalty will not be optimal. The MCEN and TMCEN methods tend to pick a larger model than SEN, but a smaller model than JEN. This results in the MCEN and TMCEN methods correctly choosing more true predictors than SEN and fewer false positive predictors than JEN for weaker signal cases. In terms of variable selection MCEN and TMCEN tend to do better than TGL in terms of both true and false variable selection. For the stronger signal cases the SEN approach does the best in terms of variable selection, tending to have the maximum number of true covariates selected, while a smaller number of false covariates selected. Similar conclusions can be derived for the plots of $p=12$ and $p=100$, which are available in the supplementary material. 


%
%
\begin{figure}
	\centering
		\includegraphics[width=4in]{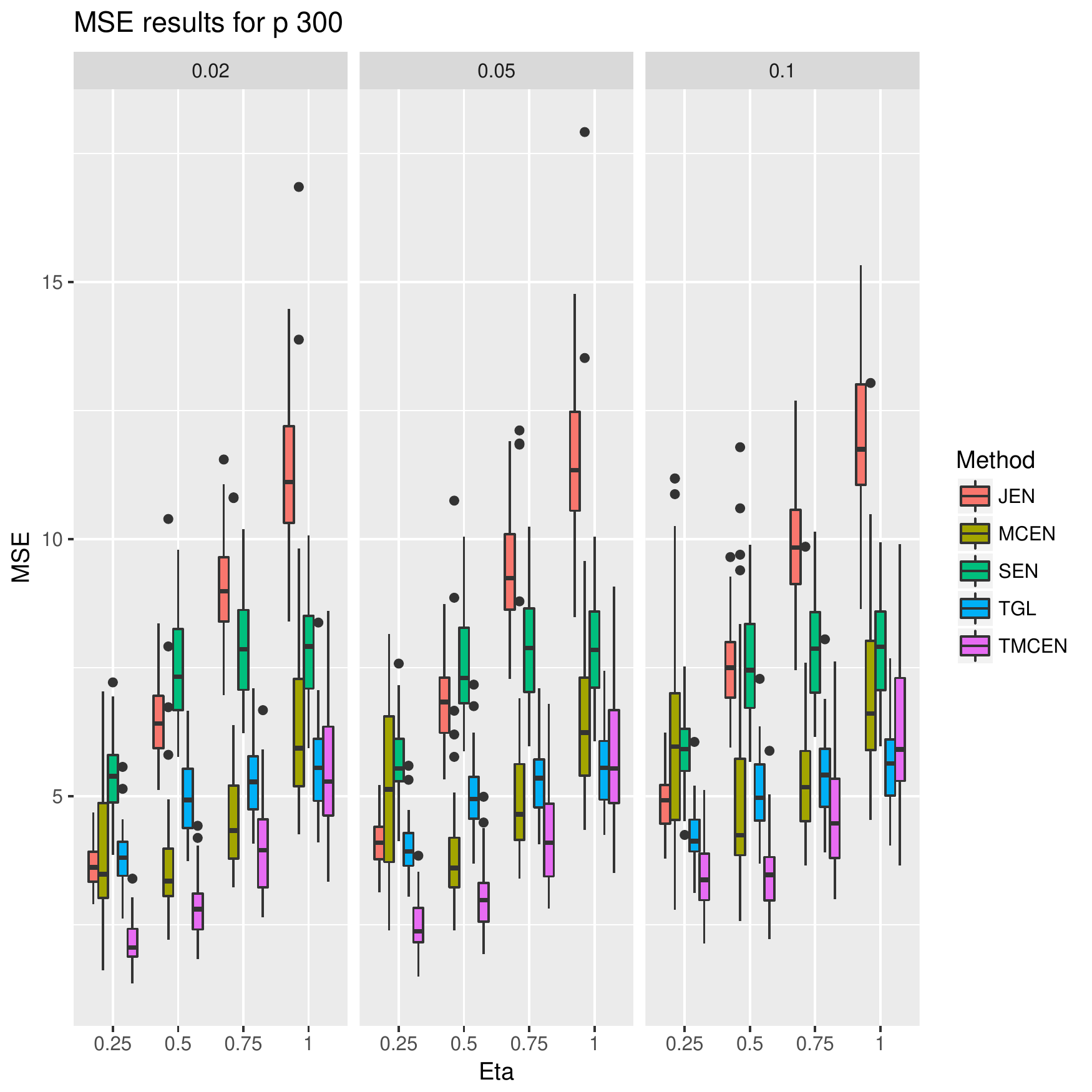}
	\caption{MSE results for the Gaussian simulations with p equal to 300. Different box plots correspond to different values of $\lambda$, while x-axis values are for different values of $\eta$.}
	\label{fig:mse_p_300}
\end{figure}
%
%
%
\begin{figure}
	\centering
		\includegraphics[width=4in]{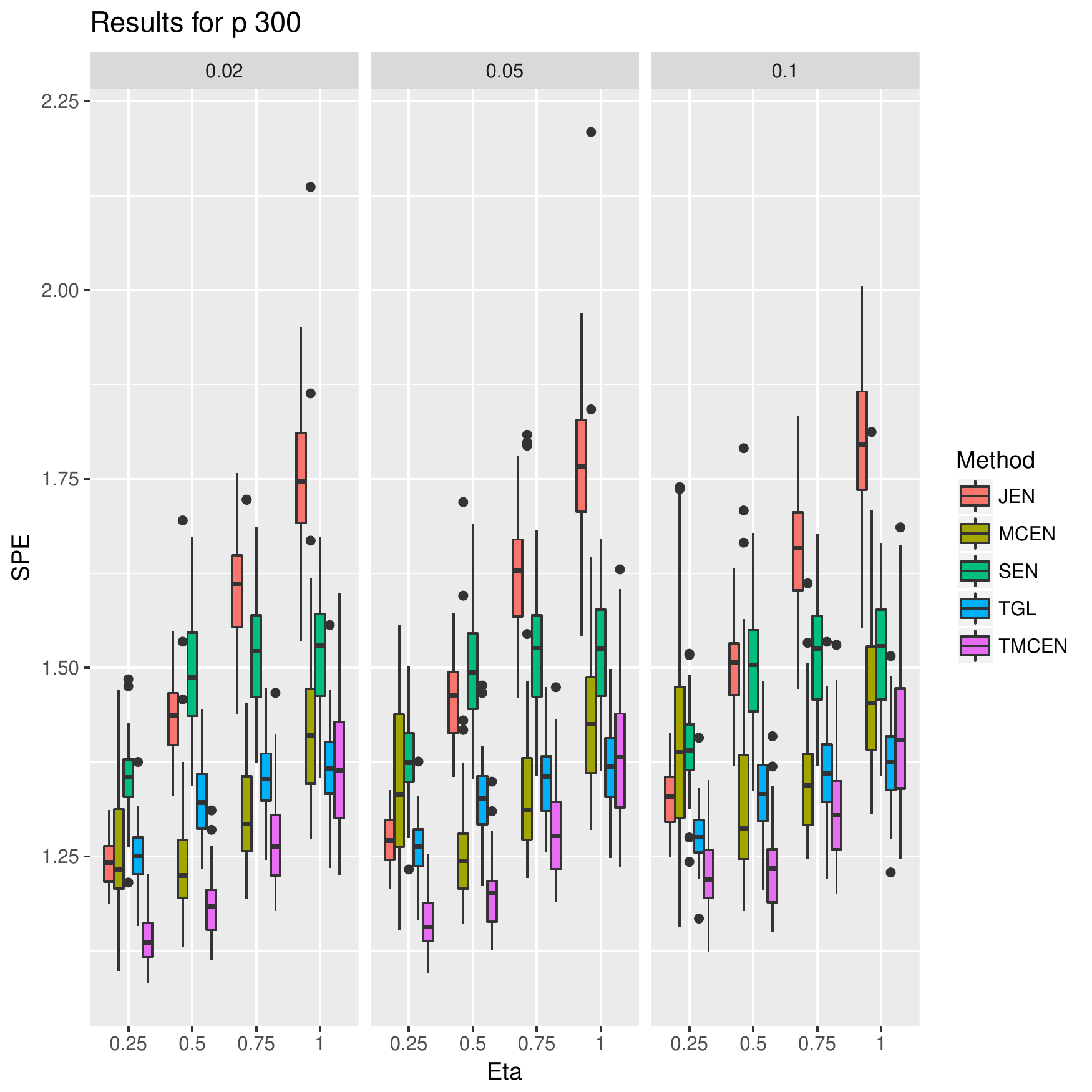}
	\caption{ASPE results for the Gaussian simulations with p equal to 300. Different box plots correspond to different values of $\lambda$, while x-axis values are for different values of $\eta$.}
	\label{fig:pse_p_300}
\end{figure}

\begin{figure}
	\centering
		\includegraphics[width=4in]{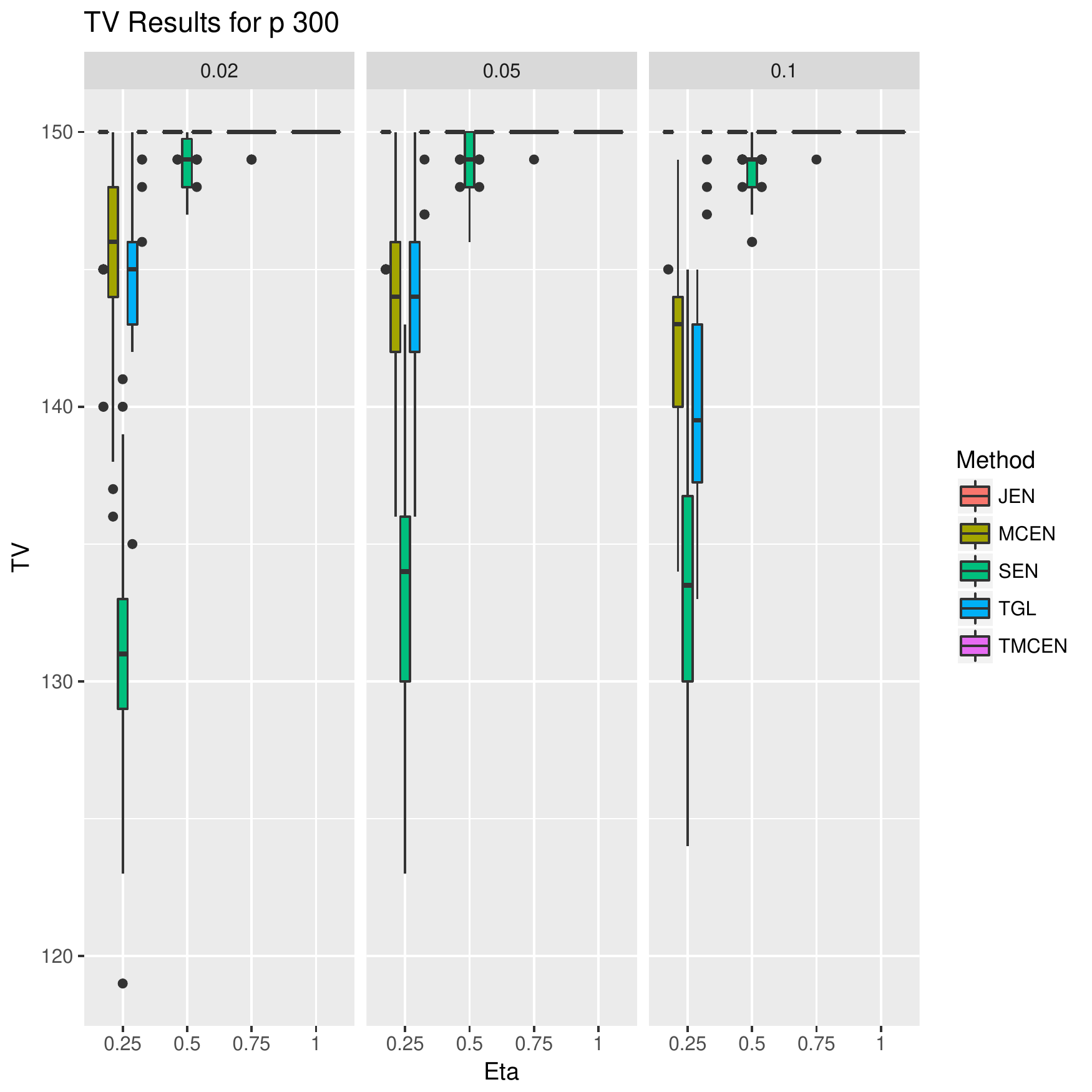}
	\caption{TV results for the Gaussian simulations with p equal to 300. Different box plots correspond to different values of $\lambda$, while x-axis values are for different values of $\eta$.}
	\label{fig:tv_p_300}
\end{figure}

\begin{figure}
	\centering
		\includegraphics[width=4in]{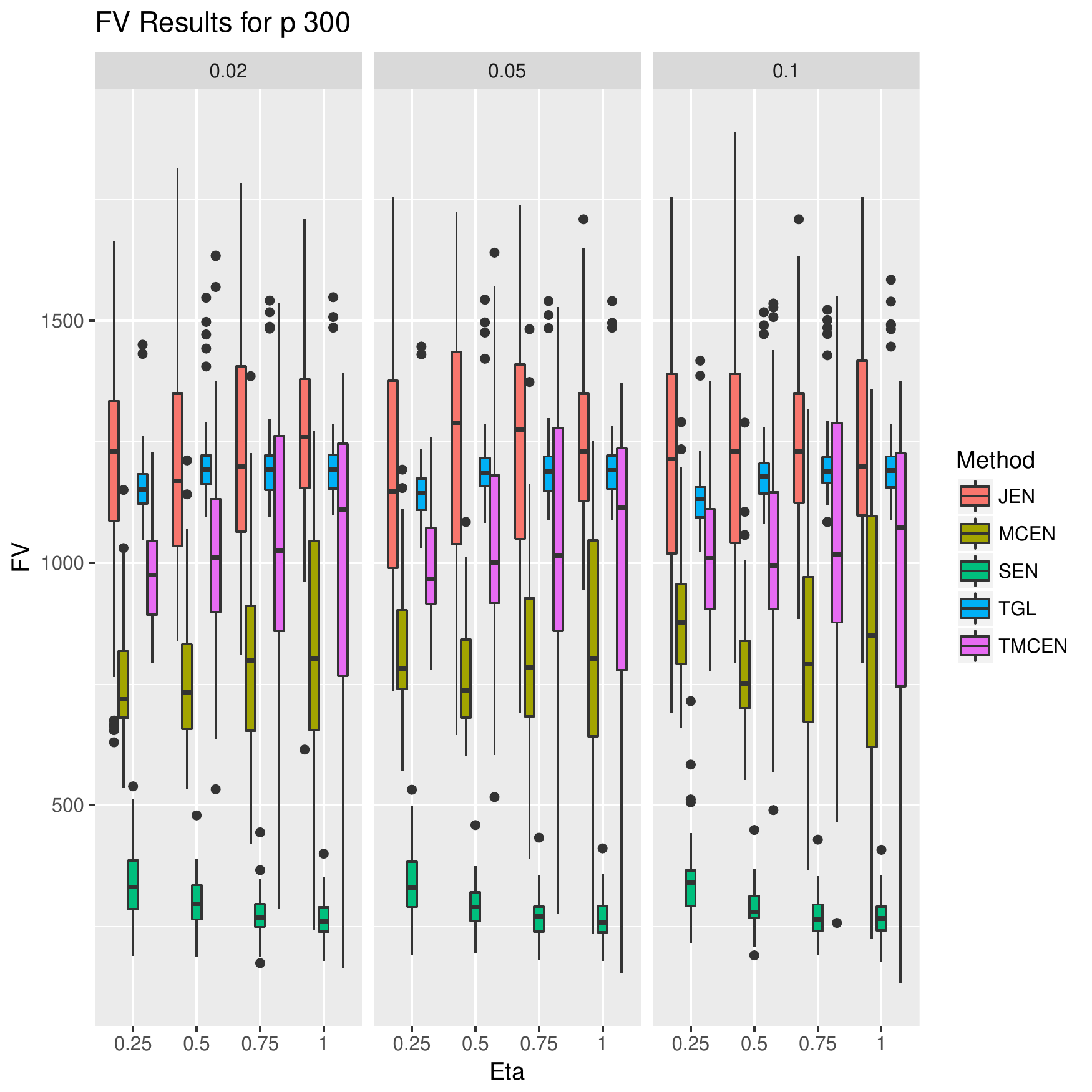}
	\caption{FV results for the Gaussian simulations with p equal to 300. Different box plots correspond to different values of $\lambda$, while x-axis values are for different values of $\eta$.}
	\label{fig:fv_p_300}
\end{figure}

\subsection{Binomial Simulations}

In this setting we have a binomial response variable and compare performance of the MCEN estimator \eqref{BinApprox} to SEN \eqref{glm_init}, for $r=15$ and $p=12,100$ or $300$. The SEN models were fit using the \texttt{glmnet} package in R \citep{friedman2008}. Similar to the previous section, the covariates are generated by $\vx_i \sim N(\mathbf{0}_p,\Sigma_x)$, where $\Sigma_x$ has the same structure provided in the Gaussian simulations with $\rho=.9$. 

We use the same structure of $B$ presented in Section 5.1, consider the same values of $\eta$ and $\lambda$ and again perform 50 replications with a sample size of 100. 
Tuning parameters for the models are estimated via 10-folds cross validation as explained in Section 4.2. For $Q$, the number of groups, we consider values of 2, 3, and 4. For the SEN method each response $c \in \{1,\ldots,r\}$ will be associated with its own tuning parameters of $\gamma_c$ and $\delta_c$ that will be selected by maximizing the equivalent of \eqref{glm_cv} for only one response. 

Define $\vbeta^*_k(\eta,\lambda)$ as the $k$th column vector of $B^*_{\eta,\lambda}$. In all settings the $k$th response of the $i$th observation, $y_{ik}$, is an independent draw from $\mbox{Bin}(1,\pi_{ik}^*)$ where 
\begin{equation*}
\pi_{ik}^* = \frac{\mbox{exp}\left\{\vx_i'\vbeta^*_k(\eta,\lambda)\right\}}{1+\mbox{exp}\left\{\vx_i'\vbeta^*_k(\eta,\lambda)\right\}}.
\end{equation*}
%

%

To evaluate the methods, 1000 validation observations are generated from 
the data generating model and the KL divergence is measured for each of the 50 replications.  The KL divergence for a replication is defined as,

\begin{equation*}
\label{kl}
\frac{1}{1000}\sum_{i=1}^{1000}\sum_{k=1}^{15} \left\{\log\left(\frac{\widehat{\pi}_{ik}}{\pi^*_{ik}}\right)\widehat{\pi}_{ik}+\log\left(\frac{1-\widehat{\pi}_{ik}}{1-\pi^*_{ik}}\right)\left(1-\widehat{\pi}_{ik}\right)\right\}, 
\end{equation*}
where $\pi^*_{ik}$ is the true probability and $\widehat{\pi}_k(x_i,\delta,\gamma)$ is the  estimated probability for response $k$ for validation observation $i$.  

Box plots are presented to compare the KL divergence of MCEN and SEN for the different settings in the case of $p=300$.  The results of simulation in cases where $p=12$ and $100$ are available in the supplementary material. Figure \ref{ind_p300} presents the KL divergence results from the 50 replications for the different settings of $\eta$ and $\lambda$.  In terms of KL divergence MCEN outperforms SEN in all settings.  

A comparison of MSE of coefficient estimates between methods is shown using box plots in Figure \ref{mse_p300} and shows similar results to the cases of $p=12$ and $100$ available in the supplementary material.  The results show that based on MSE binomial MCEN either outperforms or performs as well as binomial SEN.  Figures \ref{bin_tp_p300} and \ref{bin_fp_p300} report the number of true positive and false positive predictors selected by each method for each combination of $\eta$ and $\lambda$ when $p=300$, and MCEN outperforms SEN by generally selecting more true positive predictors, while the number of false predictors selected varies by the signal size. For smaller signals MCEN selects a smaller number of false predictors, but for larger signals MCEN tends to select more false predictors. 




\begin{figure}
\centering
\includegraphics[width=4in]{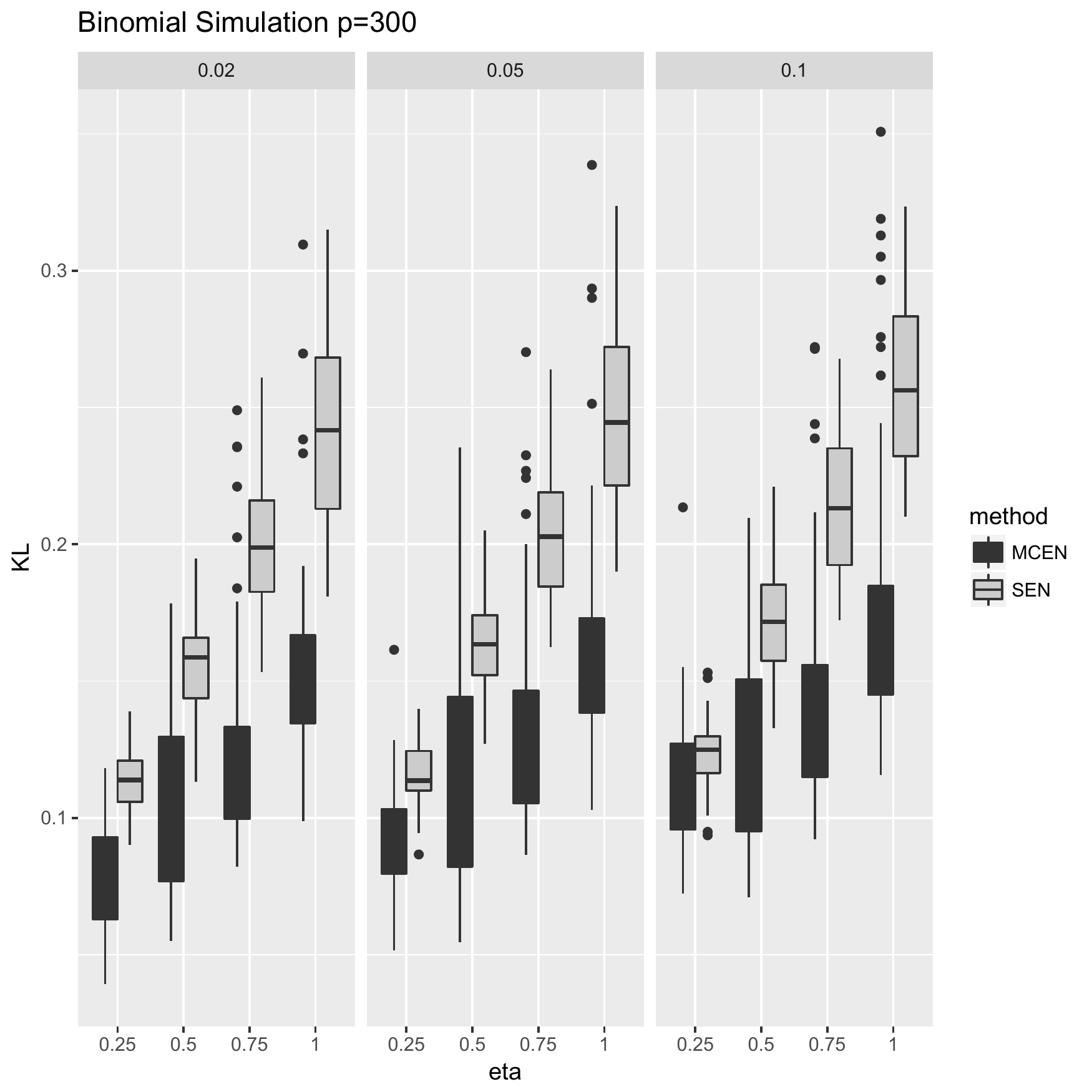}
\caption{Simulation results comparing binomial SEN and binomial MCEN for p=300 at varying values of $\lambda$ and $\eta$. Each box plot represents results
for a different value of $\eta$, given at the top of the plot.}
\label{ind_p300}
\end{figure}

\begin{figure}
\centering
\includegraphics[width=4in]{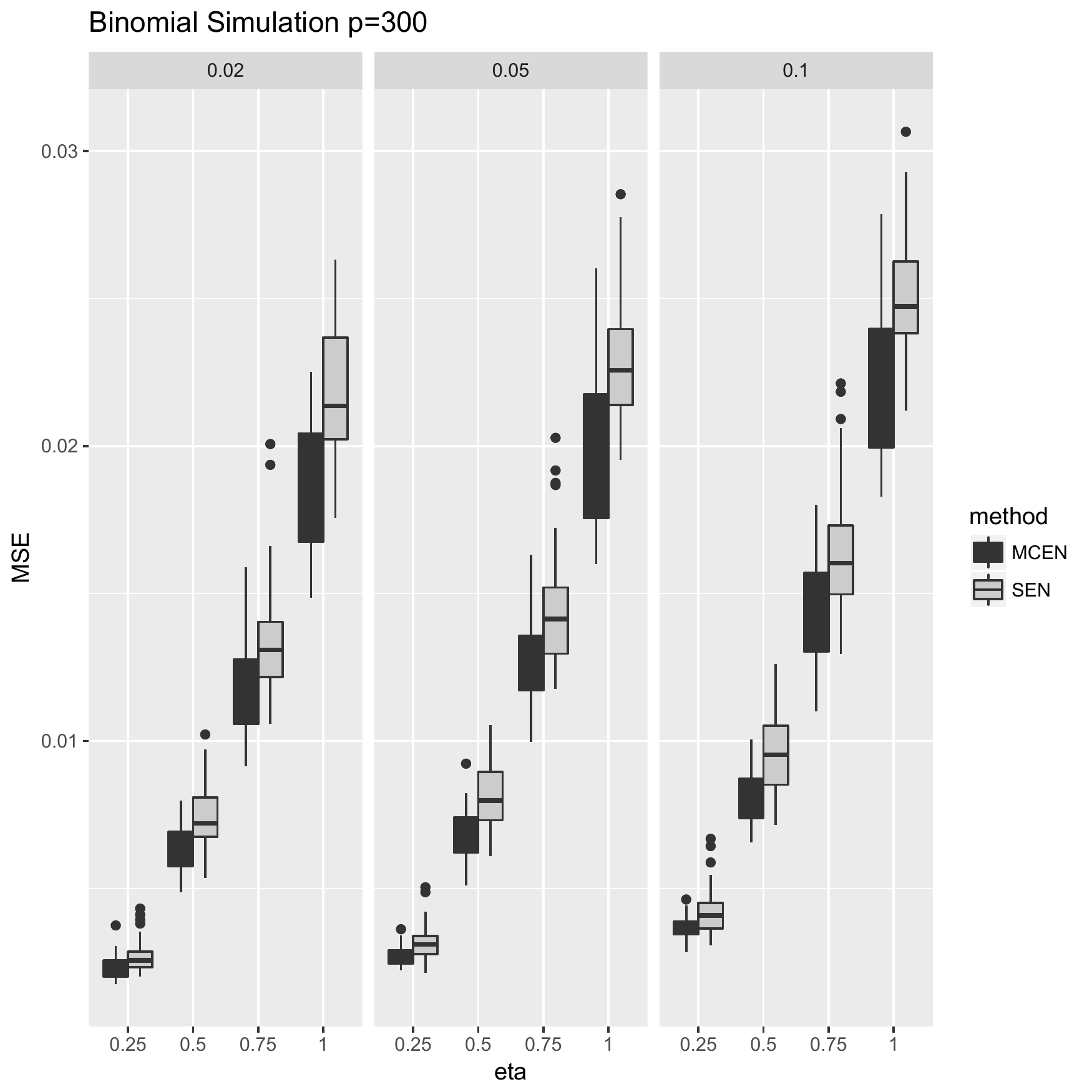}
\caption{Simulation results comparing MSE of binomial SEN and binomial MCEN when p=300 at varying values of $\lambda$ and $\eta$. Each box plot represents results
for a different value of $\eta$, given at the top of the plot.}
\label{mse_p300}
\end{figure}



\begin{figure}
\centering
\includegraphics[width=4in]{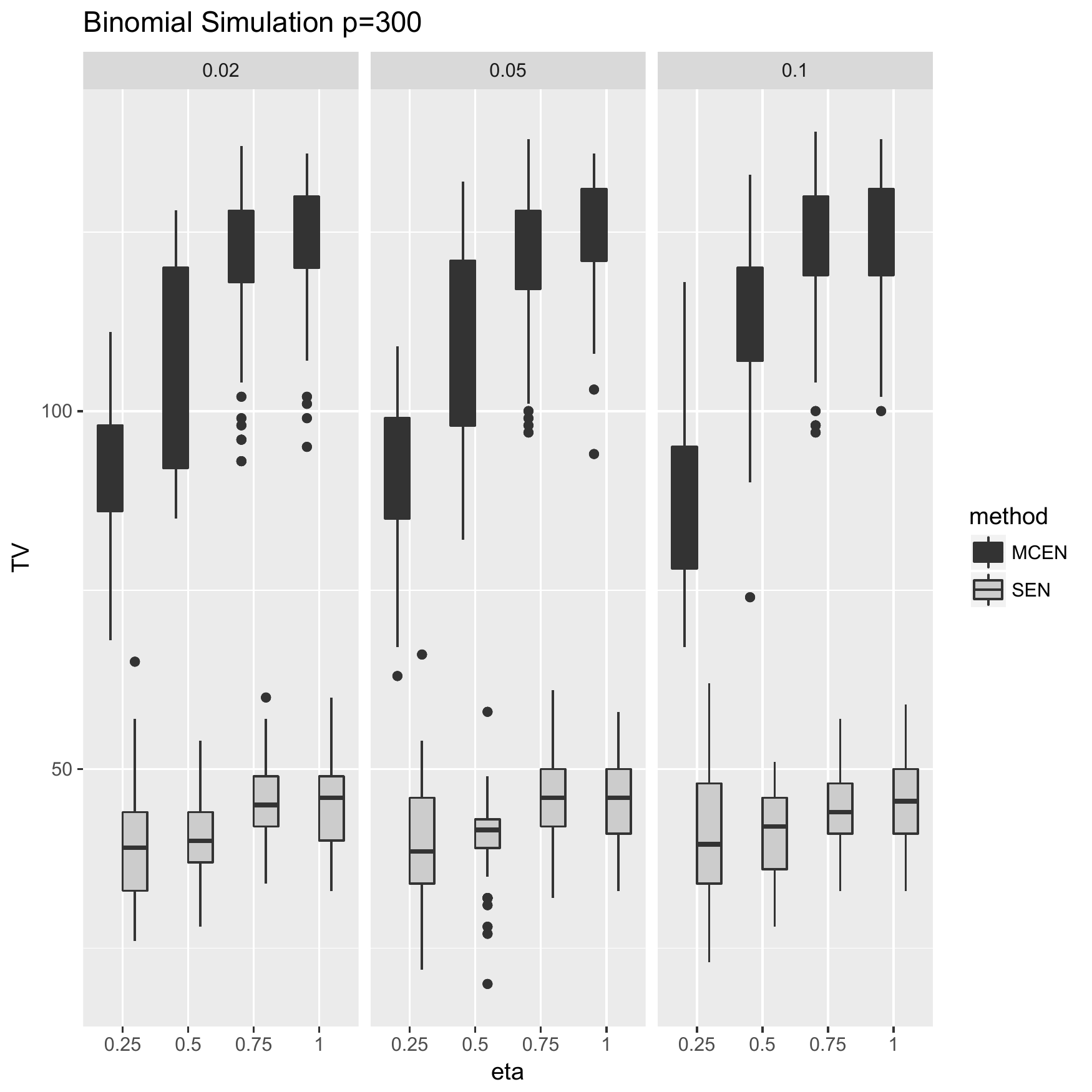}
\caption{Simulation results comparing TV results by binomial SEN and binomial MCEN when p=300 at varying values of $\lambda$ and $\eta$. Each box plot represents results
for a different value of $\eta$, given at the top of the plot.}
\label{bin_tp_p300}
\end{figure}



\begin{figure}
\centering
\includegraphics[width=4in]{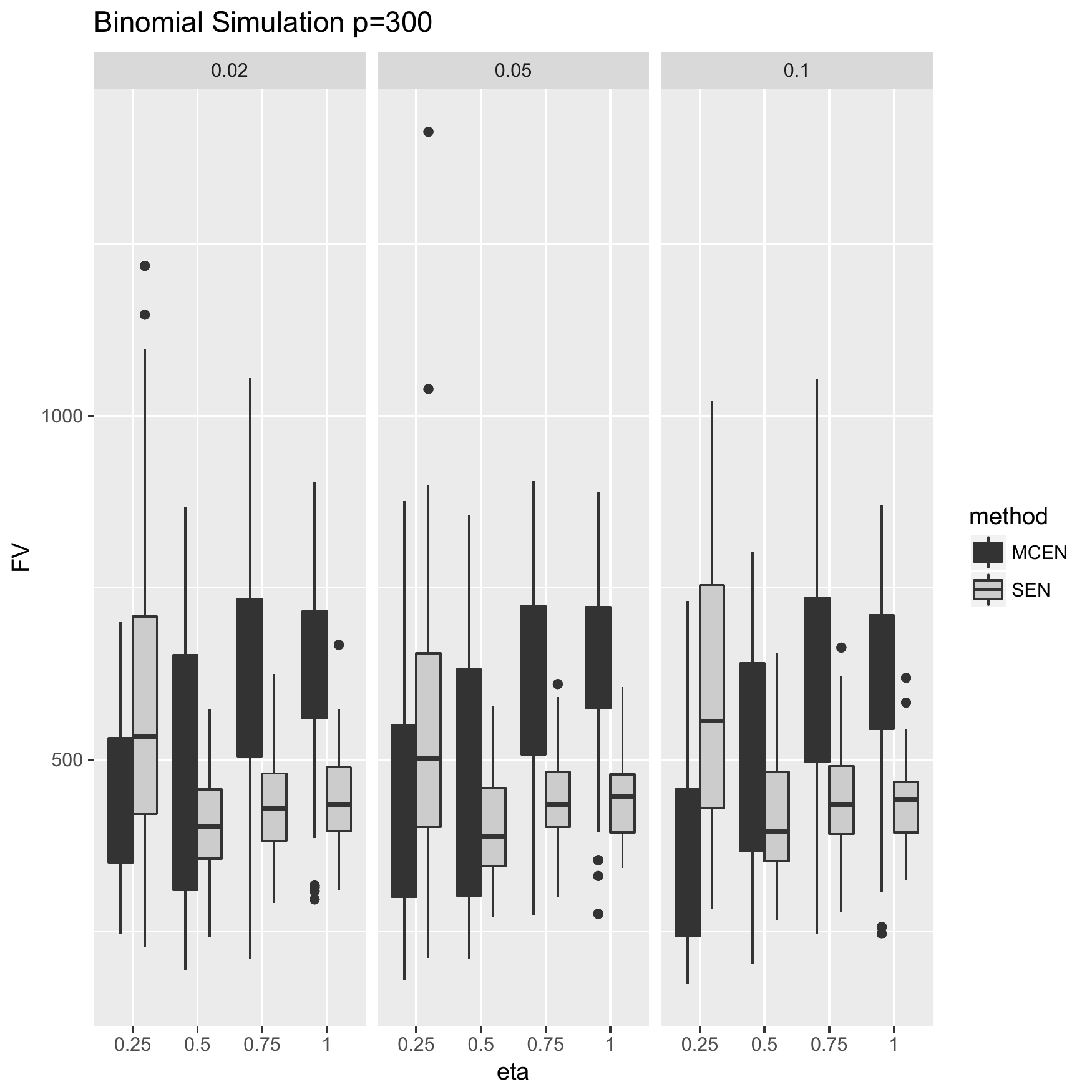}
\caption{Simulation results comparing FV results by binomial SEN and binomial MCEN when p=300 at varying values of $\lambda$ and $\eta$. Each box plot represents results
for a different value of $\eta$, given at the top of the plot.}
\label{bin_fp_p300}
\end{figure}

\section{Data Example}

\subsection{Genomics Data}
\citet{genomicData} collected gene expression profiles, demographic and birth information from 72 pregnant mothers. Using these data we modeled four response variables: placental weight, newborn weight, cotinine level from the mothers' peripheral blood sample and cotinine level from the umbilical cord blood sample. Smoking status, mother's age, mother's BMI, parity, gestational age and expression data for 24,526 gene probes from the mother's peripheral blood sample were used as covariates. Our analysis was limited to the 65 mothers with complete data. From a clinical perspective an accurate model for birth weight would be the primary interest as birth weight is associated with both short and long term negative health outcomes \citep{turan}. Including placental weight as an additional response could potentially be helpful in the MCEN model because previous studies found placental and newborn weight are correlated \citep{linearPlaBrth,nigerianBirthWt,fetalGrowth}, but placental weight is hard to use as a predictor since it is observed at birth. The two measurements of cotinine levels are essentially measuring the same thing and are clearly related to smoking status. Thus we can test if these variables were correctly clustered and smoking status selected in the MCEN models. 

The same methods used in Section 5.1 are used to fit the data, except we did not implement the TMCEN method as we did not assume to know the true clustering structure of the response variables. To evaluate the methods we randomly partitioned the data into 50 training and 15 testing samples. All four response variables are modeled on the log scale. In the training data all variables are centered and scaled to have mean zero and a standard deviation of one. Models are fit using the training data, then predictions are evaluated on the testing samples. We compare the methods by looking at the ASPE, as defined in Section 5.1. For MCEN we consider clusters of size 1, 2 and 3. The process is repeated 100 times and the ASPE for all methods and responses are included in Figure \ref{fig:gene_plot}. The MCEN method performs the best for modeling birth weight, the most clinically interesting variable, and is about the same as the other methods for modeling placenta weight. However, it does worse than the other three methods for modeling cotinine level. In all 100 random partitions the MCEN method correctly grouped the two cotinine responses together and selected smoking status as a predictor for those two responses.  
\begin{figure}
	\centering
		\includegraphics[width=0.9\textwidth]{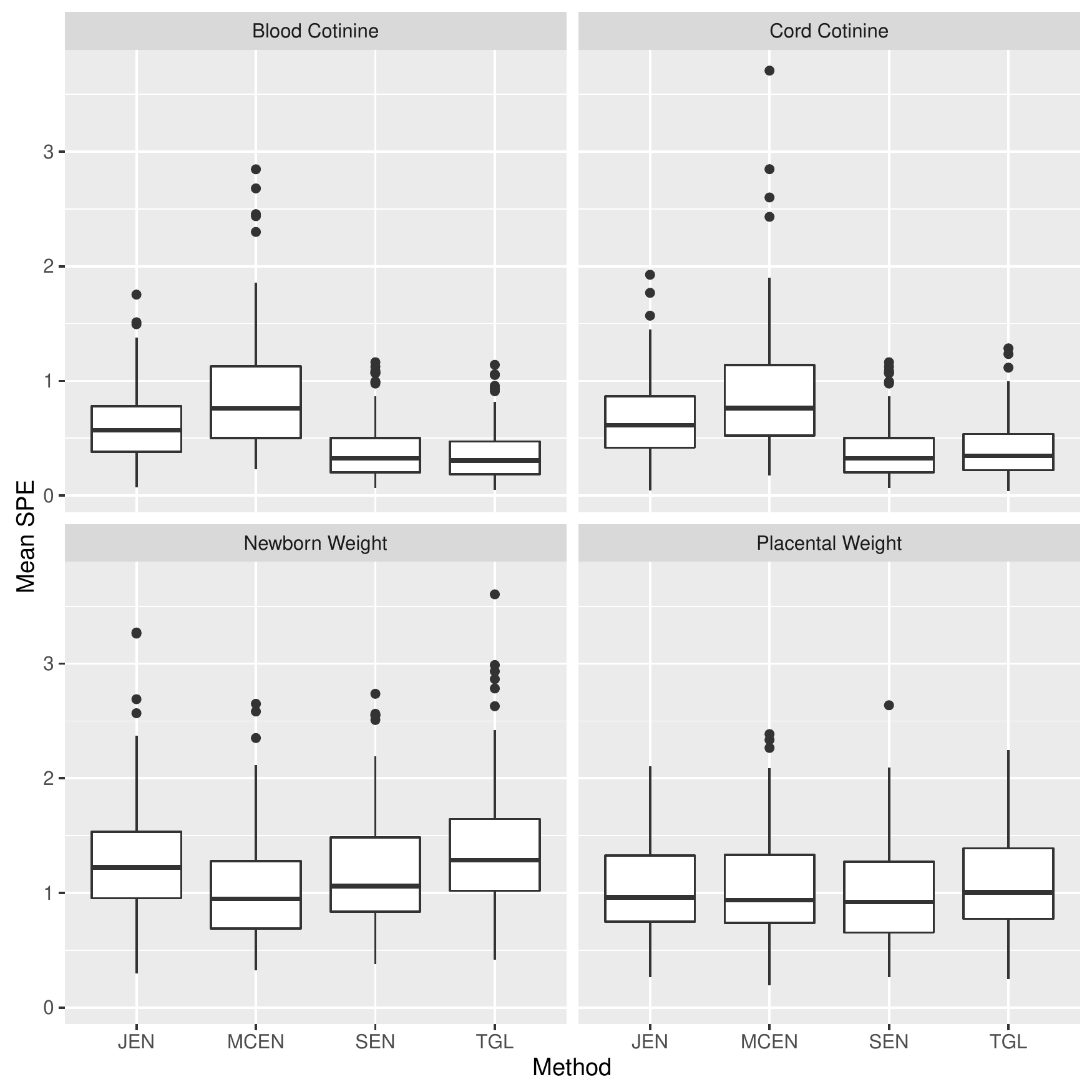}
	\caption{Mean SPE from 100 random partitions}
		\label{fig:gene_plot}
\end{figure}

\subsection{Concession Data}

We analyzed 2000 concession transactions from a major event venue. 
Each transaction is linked with the customer's information from the
venue's loyalty program. These data are proprietary and cannot be made publicly available.  
Whether a customer purchases a specific item, 0 if they do and 1 if they do not, is the response and customer information from the loyalty program, such as seat identification
and amount spent on previous concession sales, are treated as the covariates. The multiple response setting comes from there being multiple items available for sale at the concession stands. In total there are 34 predictor variables, stemming from purchase history from the venue, ticketing, and seating. The same customer may appear in the data more than once, but any correlation structure is ignored. We analyze two different sets of responses with the same covariates. 
The point-of-sale system records purchases in two different item set groupings; menu group (7 items) and food group (12 items).  The different groups provide different insights into customer habits as the items form different groups.   

Similar to the simulation section we compared SEN and MCEN, with tuning parameter selected as described in Section 5.2. For $Q$, the number of groups, we consider values of $\{1,2,\ldots,7\}$.  We divide 2000 transactions into training and validation sets.  
There is a time component to our data, which we ignore, but use to evaluate the predictive performance of our models. The first 1000 transactions are used to train our models, with 3-fold cross validation used to select the tuning parameters for both MCEN and SEN. The predictive performance of the models are then compared using the next 1000 transactions. 
	
For comparison of the methods we present the ROC curves as a metric for classification performance on the 1000 validation observations. Figure \ref{MG_fig} presents the ROC curves and shows that in most situations the binomial logistic MCEN was competitive with SEN.  In this analysis MCEN found 3 response clusters where the first cluster contained concession food, the second cluster contained both alcoholic and non-alcoholic beverages, and the third cluster contained all specialty item groups. For comparison we used k-means clustering on the predicted values of the independent elastic net, and selected the number of clusters based on the gap statistic. It selected 2 clusters. The first cluster had concession and both beverage types, while the second cluster contained all specialty items. 

The resulting ROC curves for the food group analysis are presented in Figure \ref{FG_fig}.  Five clusters were found by binomial logistic MCEN. The first cluster has popcorn, hamburger, french fires, bottled water, appetizers, and a chicken basket.  These correspond to low selling non-alcoholic items.  The second cluster consists of hot dogs, craft beer and misc sides, which represents a group of higher selling items.  The last three clusters are singleton clusters consisting of non-alcoholic beverages, domestic beer, and liquor.  These clusters represent high selling items with different demographics important in each.  We also ran k-means clustering on the predicted values from the EN results, and found no distinct clustering using the gap statistic to select the number of clusters. Thus the MCEN method clusters all cold beverages together, while using k-means on fitted values from SEN does not find this clustering. The results of both analyses show that SEN outperforms MCEN using ROC curves. This could be due to the coarseness of MCEN framework, which assumes a similar sparsity structure for all responses. The grouping insights given from the resulting MCEN clusters provide a starting point for investigating each cluster individually with its own MCEN models. This procedure would allow for different levels of sparsity for different clusters. Flexibility such as this should be addressed in extensions of MCEN. 

\begin{figure}
\begin{center}
\includegraphics[width=5in]{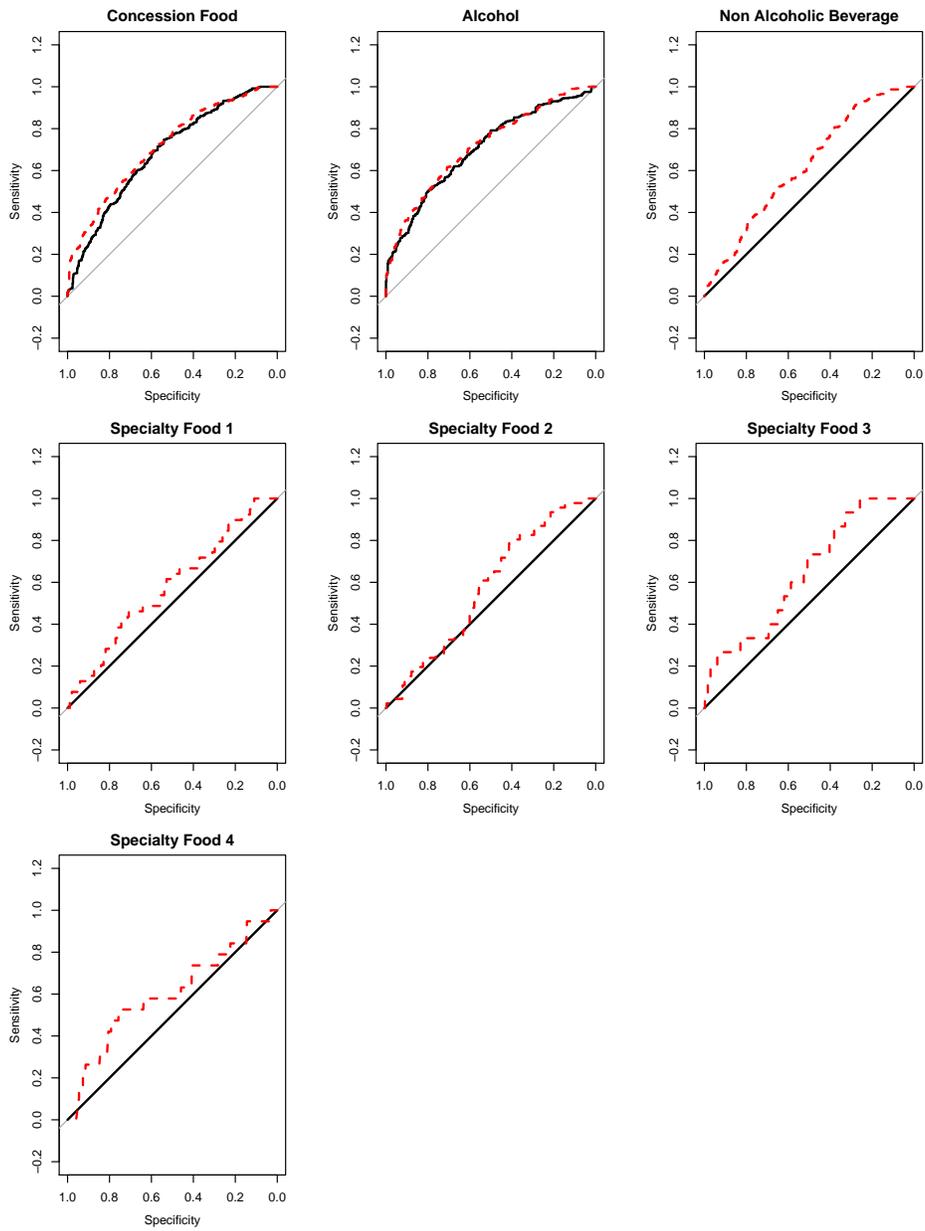}
\end{center}
\caption{ROC curves for the 1000 validation observations for the menu group item responses. The black lines represent the ROC for MCEN and red for SEN. }
\label{MG_fig}
\end{figure}

\begin{figure}
\begin{center}
\includegraphics[width=5in]{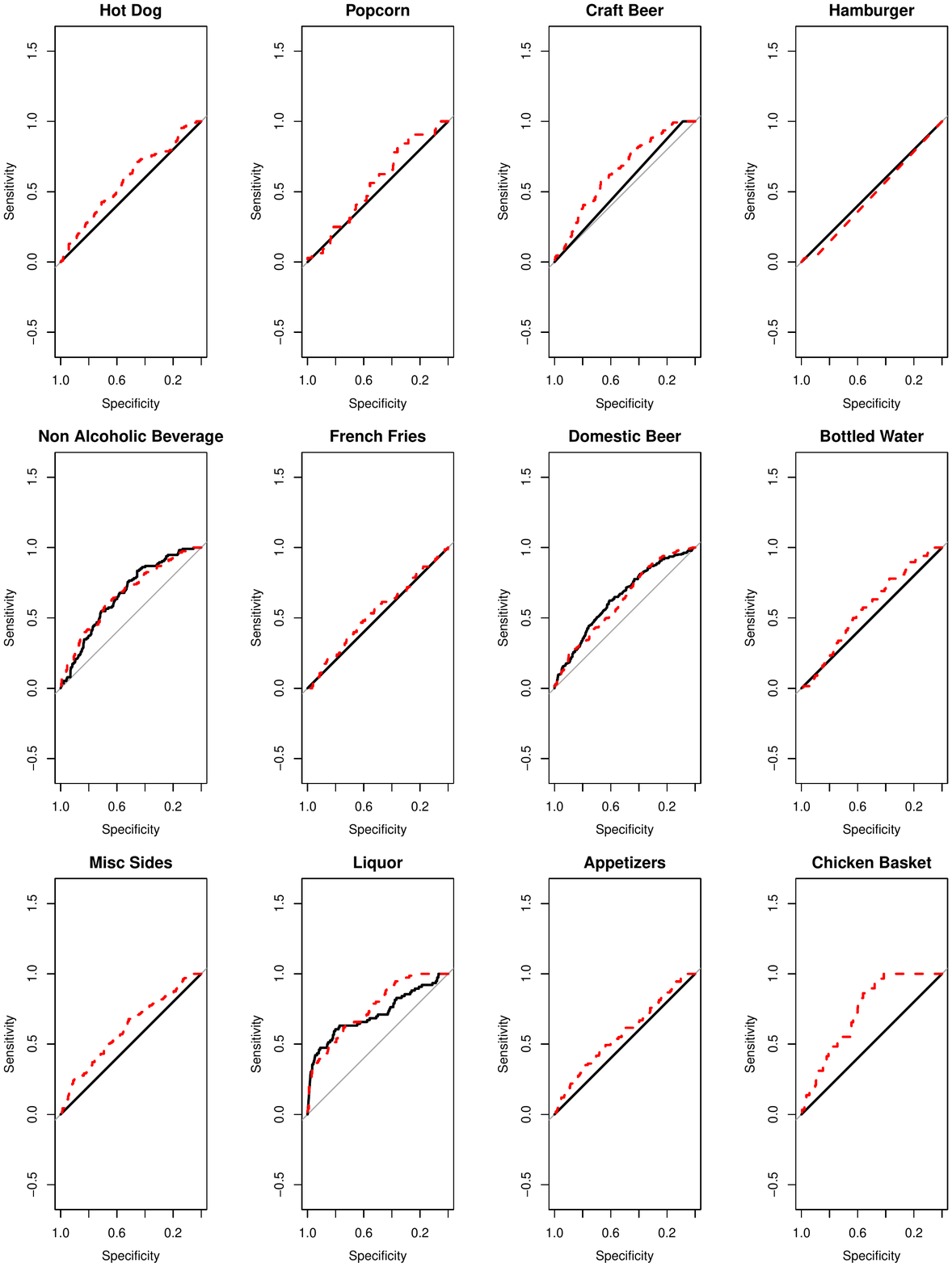}
\end{center}
\caption{ROC curves for the 1000 validation observations for the food group item responses comparing EN and MCEN. The black lines represent the ROC for MCEN and red for SEN.}
\label{FG_fig}
\end{figure}

\section{Discussion}

We present a method for simultaneous estimation of regression coefficients and response clustering for a multivariate response model. The method is introduced for the case of continuous and binary responses. Future work could include extending the model to other GLM settings. Currently, our model imposes the same amount of sparsity on all response models, but this could be relaxed by allowing a sparsity tuning parameter for each individual response or each response group. An R package that implements the methods outlined in this article will be available on CRAN, upon publication of this work.

Define $\ell(B)$ as a likelihood or convex objective function, $P(\vbeta,D_q)$ as a distance function between all elements where $\sum_{q=1}^Q P(B,D_q)$ is an optimization problem to separate the $r$ $p$-dimensional coefficient vectors into $Q$ clusters and $p_\delta(B)$ as a penalty function with tuning parameter $\delta$. Then the MCEN method could be generalized to a larger class of estimators where
\begin{equation}
\label{gen_opt}
(\hat{B},\hat{D}) = \argmin_{B, D_1,\ldots, D_Q} \ell(B) + \gamma \sum_{q=1}^Q P(B,D_q) + p_\delta(B).
\end{equation}
One example would be to define $P(B,D_q)$ as an $L_1$ norm to penalize the difference between fitted values, similar to a fused lasso penalty \citep{tibs05, tibs_14}. 
An advantage of the estimator proposed in this paper is that by defining $P(B,D_q)$ as the $L_2$ norm squared, when the coefficients are fixed, the minimization problem is equivalent to a k-means problem. However, different definitions of $P(B,D_q)$ may not have well studied clustering algorithms to solve the optimization to define the groupings. One challenge of extending this work would be finding functions $P(B,D_q)$ that become well defined clustering problems when $B$ is known or proposing new algorithms for solving $P(B,D_q)$. Otherwise the two-step algorithm proposed in this paper would not work.


The asymptotics in this paper are limited to consistency of the estimator when groups are known. \citet{zhao_16} presented an inference framework for a similar estimator that uses a fusion penalty and demonstrated that inference is still possible even if the structure of the graph that determines the fusion penalty is not correctly specified. Extending the results provided here to include inference would be of great use to practitioners and a good topic for future research.

\section*{Appendix}
\section*{A.1. Proof of Theorem \ref{thm1}} 
\begin{proof}
Define 
\begin{equation*}
L(B) = \frac{1}{2n}\sum_{i=1}^n\sum_{c=1}^r (y_{ic} -\vx_i^T\vbeta_c)^2 +\frac{\gamma}{2n}\sum_{q=1}^Q\frac{1}{|D_q|}\sum_{l,m \in D_q} ||X(\vbeta_l-\vbeta_m)||_2^2.
\end{equation*}
For $l \in D_q$
\begin{equation*}
\frac{\partial}{\partial \vbeta_l} L(B) = -\frac{1}{n} \left(X^TY - X^TX\vbeta_l\right) + X^TX \frac{2\gamma}{n|D_q|} \sum_{c \in D_q, \, c \neq l}  \vbeta_l - \vbeta_c.
\end{equation*}
Thus,
\begin{equation}
\label{thm1_eq1}
\bar{\vbeta}_l\left\{1 + \frac{2\gamma(|D_q|-1)}{|D_q|}\right\}  - \dot{\vbeta}_l  - \frac{2\gamma}{|D_q|} \sum_{c \in D_q, \, c \neq l} \bar{\vbeta}_c = 0.
\end{equation}
Therefore for $l,m \in D_q$ 
\begin{eqnarray*}
&&  \bar{\vbeta}_l\left\{1 + \frac{2\gamma(|D_q|-1)}{|D_q|}\right\}  - \dot{\vbeta}_l  - \frac{2\gamma}{|D_q|} \sum_{c \in D_q, \, c \neq l} \bar{\vbeta}_c \\
&-& \bar{\vbeta}_m\left\{1 + \frac{2\gamma(|D_q|-1)}{|D_q|}\right\}  - \dot{\vbeta}_m  - \frac{2\gamma}{|D_q|} \sum_{c \in D_q, \, c \neq m} \bar{\vbeta}_c \\
&=& \left(\bar{\vbeta}_l-\bar{\vbeta}_m\right)\left(1 + 2\gamma\right)  - \dot{\vbeta}_l + \dot{\vbeta}_m = 0.
\end{eqnarray*}
Therefore for $l,m \in D_q$ and $l \neq m$
\begin{equation}
\label{thm1_eq2}
\bar{\vbeta}_m = \bar{\vbeta}_l + \frac{1}{1+2\gamma}\left(\dot{\vbeta}_m - \dot{\vbeta}_l\right).
\end{equation}
Combining \eqref{thm1_eq1} and \eqref{thm1_eq2} gives
\begin{eqnarray*}
\bar{\vbeta}_l\left\{1 + \frac{2\gamma(|D_q|-1)}{|D_q|}\right\} &=& \dot{\vbeta}_l  + \frac{2\gamma}{|D_q|} \sum_{c \in D_q, \, c \neq l} \bar{\vbeta}_l + \frac{1}{1+2\gamma}\left(\dot{\vbeta}_c - \dot{\vbeta}_l\right)\\
&=& \dot{\vbeta}_l  + \frac{2\gamma(|D_q|-1)}{|D_q|}  \bar{\vbeta}_l + \frac{2\gamma}{(1+2\gamma)|D_q|}\sum_{c \in D_q, \, c \neq l}\left(\dot{\vbeta}_c - \dot{\vbeta}_l\right),
\end{eqnarray*}
which completes the proof. 
\end{proof}

\section*{A.2. Proof of Theorem \ref{thm2}}
\begin{proof}
It is assumed that $E(\epsilon_{ic}^2) = 1$ and for $c \neq k$ that $E(\epsilon_{ic}\epsilon_{ik})=\rho$. Thus,
note that for any $v \in \{1,\ldots,r\}$  
\begin{eqnarray*}
\mbox{Var}\left(\bar{\vbeta}_v\right) &=& \mbox{Var}\left\{\frac{|D_q|+2\gamma}{(1+2\gamma)|D_q|}\dot{\vbeta}_v+\frac{2\gamma}{(1+2\gamma)|D_q|}\sum_{s \in D_q, s\neq v}\dot{\vbeta}_s\right\} \\
&=& (X^TX)^{-1}\Biggl\{\frac{|D_q|\left(|D_q|+4\gamma+4\gamma^2\right)}{(1+2\gamma)^2|D_q|^2}  + 4\rho \gamma (|D_q|-1) \frac{|D_q|+2\gamma|D_q|-2\gamma}{(1+2\gamma)^2|D_q|^2}\Biggr\}.
\end{eqnarray*}

Define $\vb_v = \sum_{s \in D_q, s\neq v} \left(\vbeta_{s}^* - \vbeta_{v}^*\right)$. The squared bias term is then 
\begin{eqnarray*} 
&& E\left[\left\{E\left(\bar{\vbeta}_v\right)-\vbeta_{v}^*\right\}'\left\{E\left(\bar{\vbeta}_v\right)-\vbeta_v^*\right\}\right]\\
&=& E\left[\left\{\vbeta_v^* + \frac{2\gamma}{(1+2\gamma)|D_q|}\vb_v -\vbeta_v^*\right\}'\left\{\vbeta_v^* + \frac{2\gamma}{(1+2\gamma)|D_q|}\vb_v-\vbeta_v^*\right\}\right] \\
&=& \frac{4\gamma^2}{(1+2\gamma)^2|D_q|^2}||\vb_v||_2^2.
\end{eqnarray*}
Let $\omega = \mbox{Trace}\left\{\left(X^TX\right)^{-1}\right\}$ then MSE of $\bar{\vbeta}_v$ will be smaller than MSE of $\dot{\vbeta}_v$ if 
\begin{eqnarray*}
&& \omega\left\{\frac{|D_q|\left(|D_q|+4\gamma+4\gamma^2\right)}{(1+2\gamma)^2|D_q|^2}  + 4\rho \gamma (|D_q|-1) \frac{|D_q|+2\gamma|D_q|-2\gamma}{(1+2\gamma)^2|D_q|^2}\right\} \\
&+& \frac{4\gamma^2}{(1+2\gamma)^2|D_q|^2}||\vb_v||_2^2 \\
&<& \omega, 
\end{eqnarray*}
which is equivalent to
\begin{eqnarray*}
\gamma ||\vb_v||_2^2 &<& \omega \Biggl\{ |D_q|(|D_q|-1) + \gamma |D_q|(|D_q|-1)  \\
&& - \rho\left\{ (|D_q|-1)|D_q| + 2\gamma(|D_q|-1)^2\right\}\Biggr\}.
\end{eqnarray*}
Note that, $\omega|D_q|(|D_q|-1)(1-\rho)>0$ and thus if $||\vb_v||_2^2 \leq \omega (|D_q|-1)\left\{|D_q|-2\rho(|D_q|-1)\right\}$ then
the MSE of $\bar{\vbeta}_v$ is smaller than the MSE of $\dot{\vbeta}_v$ for any $\gamma > 0$. Otherwise, the MSE of $\bar{\vbeta}_v$ will be
smaller for any $\gamma \in \left(0, \frac{\omega|D_q|(|D_q|-1)(1-\rho)}{||\vb_v||_2^2 - \omega(|D_q|-1)\left\{|D_q|-2\rho(|D_q|-1)\right\}}\right)$. Thus for any $v \in \{1,\ldots,r\}$ then any $\gamma >0$ or any $\gamma$ sufficiently small will result in $\bar{\vbeta}_v$ having a smaller MSE than $\dot{\vbeta}_v$. The proof is complete because we can then find a $\gamma$ sufficiently small that will result in $\bar{\vbeta}_v$ having a smaller MSE than $\dot{\vbeta}_v$ for all $v \in \{1,\ldots,r\}$.
\end{proof}

\section*{A.3. Proof of Corollary \ref{cor1}}

The proof of Corollary \ref{cor1} is similar to the proof of Theorem \ref{thm1} and only changes with respect to the expected loss rather than the observed loss.

\section*{A.4. Theorem \ref{thm_strong}}
The proof of Theorem \ref{thm_strong} will include some new definitions and an alternative formulation of \eqref{opt_dfixed}. In our proof we use a vectorized version of many of the matrices. Let $\tilde{\vY} = \mbox{vec}(Y)$, $\tilde{\vbeta} = \mbox{vec}(B)$, $\tilde{\vbeta}' = \mbox{vec}(\acute{B})$ and $\tilde{\vE} = \mbox{vec}(E)$. Define $\vA_{m,s} \in \Real^r$, where $(m,s) \in D_q$, with $\sqrt{\frac{1}{|D_q|}}$ in the $m$th element, $-\sqrt{\frac{1}{|D_q|}}$ in the $s$th element and 0 in all other elements, $A_{D_q} \in \Real^{|D_q|(|D_q|-1) \times r}$ as the matrix with row vectors $\vA_{m,s}$ where $(m,s) \in D_q$, and $A_D \equiv \left(A_{D_1}^T,\ldots, A_{D_Q}^T\right)^T \in \Real^{\sum_{q=1}^Q |D_q|(|D_q|-1)\times r}$. 

Then the objective function from \eqref{opt_dfixed} can be restated as 
\begin{eqnarray*}
&& \frac{1}{2n} \left[ \tilde{\vbeta}^T\left\{ \tilde{X}^T\tilde{X}+\gamma(A_D\otimes X)^T(A_D \otimes X)\right\} \tilde{\vbeta} -2\tilde{\vY}^T\tilde{X}\tilde{\vbeta}\right] +\delta ||\tilde{\vbeta}||_1 \\
&=& \ell(\tilde{\vbeta})+\delta g(\tilde{\vbeta}).
\end{eqnarray*}
In addition define, $\tilde{\ell}(\vDelta,\tilde{\vbeta}) \equiv \ell(\tilde{\vbeta}+\vDelta) - \ell(\tilde{\vbeta}) - \langle \nabla \ell(\tilde{\vbeta}, \Delta) \rangle$.

First, we will present some lemmas that are helpful in proving Theorem \ref{thm_strong}. A general outline of the proof for Theorem \ref{thm_strong} is by using the triangle inequality we have $||\mbox{vec}(\bar{B}-B*)||_2 \leq ||\mbox{vec}(\bar{B}-\acute{B})||_2 + ||\tilde{\vbeta}'-\tilde{\vbeta}^*||_2$. Completing the proof is done by establishing upper bounds for $||\mbox{vec}(\bar{B}-\acute{B})||_2$ and $||\tilde{\vbeta}'-\tilde{\vbeta}^*||_2$. Much of the proof will require working with $\tilde{\vbeta}'$ and we introduce the following notation to easily relate $\tilde{\vbeta}'$ and $\tilde{\vbeta}^*$. For response $l$ in group $q$ define $\vH_l = \frac{1}{\sqrt{|D_q|}} \sum_{c \in D_q, c \neq l} \vA_{c,l}$ where $\vH_l \in \Real^r$ and 
$H = \left(\vH_1,\ldots,\vH_r\right)^T \in \Real^{r \times r}$. Then we have
\begin{equation*}
\tilde{\vbeta}' = \left\{\left(I_r +\frac{2\gamma}{2\gamma+1}H\right)\otimes I_p \right\} \tilde{\vbeta}^*.
\end{equation*}
For response $l$ is in group $q$ define $\vU_l = \frac{1}{|D_q|}\sum_{ k \in D_q} (\beta^*_k-\beta^*_l)$ where $\vU_l \in \Real^{p}$ and $U = \left(\vU_1,\ldots,\vU_r\right)$ with $U \in \Real^{p \times r}$ and $\tilde{\vU}=\mbox{vec}(U) \in \Real^{pr}$, then 
\begin{equation*}
\left\| \mbox{vec}\left(\acute{B}-B^*\right) \right\|_2 = \frac{2\gamma}{1+2\gamma} \left\| (H\otimes I_p)\tilde{\vbeta}^* \right\|_2 = \frac{2\gamma}{1+2\gamma} \left\| \tilde{\vU}\right\|_2.
\end{equation*}


\begin{lemma}
\label{restrictedStrong}
Under assumption A3 
\begin{equation*}
\tilde{\ell}(\vDelta,\tilde{\vbeta}') \geq \kappa ||\vDelta||_2^2 \mbox{ for all } \vDelta \in \mathcal{C}. 
\end{equation*}
\end{lemma}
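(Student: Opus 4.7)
The plan is to exploit the quadratic structure of $\ell$ and reduce the claim to a direct application of A3, with the anchoring point $\tilde{\vbeta}'$ playing no role at all. Setting $M \equiv \tilde{X}^T\tilde{X} + \gamma(A_D \otimes X)^T(A_D \otimes X)$, the loss is the pure quadratic
\begin{equation*}
\ell(\tilde{\vbeta}) \;=\; \tfrac{1}{2n}\tilde{\vbeta}^T M \tilde{\vbeta} \;-\; \tfrac{1}{n}\tilde{\vY}^T \tilde{X} \tilde{\vbeta},
\end{equation*}
so a direct expansion of $\ell(\tilde{\vbeta}' + \vDelta)$ together with $\nabla \ell(\tilde{\vbeta}') = \tfrac{1}{n} M \tilde{\vbeta}' - \tfrac{1}{n}\tilde{X}^T \tilde{\vY}$ gives
\begin{equation*}
\tilde{\ell}(\vDelta, \tilde{\vbeta}') \;=\; \ell(\tilde{\vbeta}' + \vDelta) - \ell(\tilde{\vbeta}') - \langle \nabla \ell(\tilde{\vbeta}'), \vDelta \rangle \;=\; \tfrac{1}{2n}\vDelta^T M \vDelta.
\end{equation*}
All cross-terms and linear-in-$\tilde{\vbeta}'$ contributions cancel, so the remainder depends only on $\vDelta$ and on the Hessian $n^{-1}M$.

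The next step is to split $M$ into its two pieces and keep only the one controlled by A3. The second summand $\gamma(A_D \otimes X)^T (A_D \otimes X)$ is a non-negative scalar times a Gram matrix, hence positive semi-definite, so
\begin{equation*}
\tilde{\ell}(\vDelta,\tilde{\vbeta}') \;\geq\; \tfrac{1}{2n}\,\vDelta^T \tilde{X}^T \tilde{X}\, \vDelta \;=\; \tfrac{1}{2n}\,\|\tilde{X}\vDelta\|_2^2
\end{equation*}
holds for every $\vDelta \in \mathbb{R}^{pr}$. Restricting now to $\vDelta \in \mathcal{C}$ and invoking assumption A3 yields $n^{-1}\|\tilde{X}\vDelta\|_2^2 \geq \kappa \|\vDelta\|_2^2$, whence the two displays combine to give the desired lower bound (up to the $\tfrac{1}{2}$ factor inherited from the $\tfrac{1}{2n}$ normalization of $\ell$, which is naturally absorbed into the restricted-eigenvalue constant $\kappa$ used in the statement).

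There is no substantive obstacle in this argument: once the Hessian identity above is observed, the lemma is essentially a restatement of A3 plus the trivial positivity of the fusion Gram matrix. The only pieces requiring real care are the bookkeeping of the Taylor expansion (to confirm that no $\tilde{\vbeta}'$-dependent term survives) and checking the dimensions of the Kronecker product $(A_D \otimes X)$ so that the positive semi-definite discard is valid. Neither is difficult, and the lemma then slots directly into the triangle-inequality decomposition $\|\mathrm{vec}(\bar{B} - B^*)\|_2 \leq \|\mathrm{vec}(\bar{B} - \acute{B})\|_2 + \|\tilde{\vbeta}' - \tilde{\vbeta}^*\|_2$ used to prove Theorem \ref{thm_strong}, where restricted strong convexity around $\tilde{\vbeta}'$ will be paired with an upper bound on the gradient $\nabla \ell(\tilde{\vbeta}')$ to control the first term.
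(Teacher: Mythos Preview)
Your proposal is correct and takes essentially the same approach as the paper: compute the exact quadratic Taylor remainder $\tilde{\ell}(\vDelta,\tilde{\vbeta}') = \tfrac{1}{2n}\vDelta^T M\vDelta$, discard the positive semi-definite fusion piece, and apply A3. You even flag the stray factor $\tfrac{1}{2}$, which is exactly what happens in the paper's own proof (it concludes with $\tfrac{\kappa}{2}\|\vDelta\|_2^2$ rather than $\kappa\|\vDelta\|_2^2$); absorbing it into the constant is the right resolution.
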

\begin{proof}
From the definition of $\tilde{\ell}(\vDelta,\tilde{\vbeta})$, assumption A3 and that $\vDelta \in \mathcal{C}$ it follows that
\begin{eqnarray*}
\tilde{\ell}(\vDelta,\tilde{\vbeta}')  &=& \frac{1}{2n} \vDelta^T\left\{ \tilde{X}^T\tilde{X}+\gamma(A_D\otimes X)^T(A_D \otimes X)\right\}\vDelta \\
&\geq& \frac{1}{2n} \vDelta^T \tilde{X}^T\tilde{X} \vDelta \\
&\geq& \frac{\kappa}{2} ||\vDelta||_2^2.
\end{eqnarray*}
\end{proof}

For any vector $\va = (a_1,\ldots,a_{pr})^T \in \Real^{pr}$ we define the $||\va||_{\infty}$ as the $L_\infty$ norm of $\va$, that is $||\va||_{\infty} = \underset{i}{\max} |a_i|$.

\begin{lemma}
\label{deltaLemma}
For $\bar{B}$ from \eqref{opt_dfixed}, 
under assumptions A1-A4 with $\delta \geq 2\left|\left|\nabla \ell(\tilde{\vbeta}')\right|\right|_{\infty}$ then there exists a positive constant $c_3$ such that 
\begin{equation*}
\left|\left| \mbox{vec}(\bar{B}-\acute{B})\right|\right|_2^2 \leq 9 \frac{\delta^2}{\kappa^2} s.
\end{equation*}
\end{lemma}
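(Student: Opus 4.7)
The plan is to run the standard Lasso oracle-inequality argument, but with the biased target $\tilde{\vbeta}'=\mbox{vec}(\acute{B})$ playing the role that the true parameter usually plays. The first thing to notice is that Corollary \ref{cor1} together with assumption A5 implies that $\tilde{\vbeta}'$ is supported on $S$: each $\acute{\vbeta}_l$ is an affine combination of the $\vbeta^*_k$ for $k\in D_q$, and under A5 all of these share the same zero-pattern as $\vbeta^*_l$. This is exactly the hook that lets the usual support-decomposition trick go through.

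Concretely, I would set $\vDelta=\mbox{vec}(\bar{B})-\tilde{\vbeta}'$ and start from the optimality of $\bar{B}$ for the penalized objective, which gives
$$\ell(\tilde{\vbeta}'+\vDelta)-\ell(\tilde{\vbeta}')\;\le\;\delta\bigl(\|\tilde{\vbeta}'\|_1-\|\tilde{\vbeta}'+\vDelta\|_1\bigr).$$
On the LHS I would introduce the symmetric Bregman term, $\ell(\tilde{\vbeta}'+\vDelta)-\ell(\tilde{\vbeta}')=\tilde{\ell}(\vDelta,\tilde{\vbeta}')+\langle\nabla\ell(\tilde{\vbeta}'),\vDelta\rangle$, and then use H\"older together with the hypothesis $\delta\geq 2\|\nabla\ell(\tilde{\vbeta}')\|_\infty$ to control the gradient term by $\tfrac{\delta}{2}\|\vDelta\|_1$. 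On the RHS, since $\tilde{\vbeta}'$ is supported on $S$, the triangle inequality gives $\|\tilde{\vbeta}'\|_1-\|\tilde{\vbeta}'+\vDelta\|_1\le\|\vDelta_{\mathcal{M}(S)}\|_1-\|\vDelta_{\mathcal{M}^\perp(S)}\|_1$.

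Assembling these and using that $\tilde{\ell}(\vDelta,\tilde{\vbeta}')\ge 0$ by convexity of $\ell$, a short calculation yields the cone containment $\|\vDelta_{\mathcal{M}^\perp(S)}\|_1\le 3\|\vDelta_{\mathcal{M}(S)}\|_1$, so that $\vDelta$ lies in the region where Lemma \ref{restrictedStrong} applies. Invoking that lemma gives $\tilde{\ell}(\vDelta,\tilde{\vbeta}')\ge \tfrac{\kappa}{2}\|\vDelta\|_2^2$. Chaining this with the upper bound derived above produces
$$\tfrac{\kappa}{2}\|\vDelta\|_2^2\;\le\;\tfrac{3\delta}{2}\|\vDelta_{\mathcal{M}(S)}\|_1\;\le\;\tfrac{3\delta}{2}\sqrt{s}\,\|\vDelta\|_2,$$
where the last step is Cauchy--Schwarz applied to the $s$-sparse vector $\vDelta_{\mathcal{M}(S)}$. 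Rearranging yields $\|\vDelta\|_2^2\le 9\delta^2 s/\kappa^2$, which is the claim with $c_3=9$.

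The main obstacle is really just bookkeeping: first, confirming the support of $\tilde{\vbeta}'$ from Corollary \ref{cor1} and A5 (without this, the decomposition trick for the $L_1$ penalty breaks); and second, reconciling the cone factor the argument delivers ($3$) with the definition of $\mathcal{C}$ used in assumption A3. Once these are in place, the rest is the standard Lasso analysis and the quadratic penalty contribution from $\gamma$ drops out because it only strengthens the RSC bound of Lemma \ref{restrictedStrong} and does not enter the upper-bound side.
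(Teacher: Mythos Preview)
Your proposal is correct and follows essentially the same approach as the paper: the paper's proof simply notes (via Corollary~\ref{cor1} and A5) that $\tilde{\vbeta}'$ has support $S$, observes that $\psi(\mathcal{M}(S))=\sqrt{s}$ and that the dual of $\|\cdot\|_1$ is $\|\cdot\|_\infty$, and then invokes Theorem~1 of \citet{negahban2012} together with Lemma~\ref{restrictedStrong}---you have spelled out the contents of that citation. Your caution about the cone constant is well placed: the standard argument (and Negahban's theorem) produces the factor-$3$ cone, so the paper's definition of $\mathcal{C}$ with factor $1$ should be read as factor $3$ for A3 and Lemma~\ref{restrictedStrong} to mesh with the proof.
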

\begin{proof}
Define the set $\acute{S} = \{ j \in \{1,\ldots,rp\}, \tilde{\vbeta}'_j \neq 0\}$. By assumption A5 and Corollary \ref{cor1} $\acute{S}=S$, that is $\tilde{\vbeta}'_j = 0$ if and only if $\tilde{\vbeta}^*_j=0$. Define $\psi(\mathcal{M}) \equiv \underset{\vu \in \mathcal{M}\setminus \{\mathbf{0}\}}{\mbox{sup}} \frac{||\vu||_1}{||\vu||_2}$. Note that $\psi\{\mathcal{M}(S)\} = \sqrt{s}$.  Also, note that the dual norm of the $L_1$ norm is the $L_\infty$ norm. Results follow from Theorem 1 of \citet{negahban2012} and Lemma \ref{restrictedStrong}. 
\end{proof}


\begin{lemma}
\label{acute_beta_lemma}
Under the conditions of Theorem \ref{thm_strong} there exists positive $c_1$, $c_2$ and $c_3$ such that 
\begin{equation*}
\left|\left|\mbox{vec}\left(\bar{B}-\acute{B}\right)\right|\right|_2 \leq \frac{48\sigma}{\kappa}\sqrt{\frac{s\log(rp)}{n}},
\end{equation*}
with probability at least $1-c_1\mbox{exp}(-c_2n\delta^2)$.
\end{lemma}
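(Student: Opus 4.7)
The plan is to reduce the bound to Lemma \ref{deltaLemma}: once we verify the deterministic hypothesis $\delta \geq 2||\nabla\ell(\tilde{\vbeta}')||_\infty$ with high probability, that lemma immediately yields $||\mbox{vec}(\bar{B}-\acute{B})||_2^2 \leq 9\delta^2 s/\kappa^2$, and substituting $\delta = 16\sigma\sqrt{\log(rp)/n}$ produces the claimed $(48\sigma/\kappa)\sqrt{s\log(rp)/n}$ after taking a square root. Thus the entire task is to show $||\nabla\ell(\tilde{\vbeta}')||_\infty \leq \delta/2$ on an event of probability at least $1-c_1\exp(-c_2 n\delta^2)$.

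The technical heart is the computation of $\nabla\ell(\tilde{\vbeta}')$. Writing out the gradient with respect to $\vbeta_l$ for $l \in D_q$ gives
\begin{equation*}
\nabla_{\vbeta_l}\ell(B) = -\frac{1}{n}X^T\vepsilon_l + \frac{1}{n}X^TX\left[(\vbeta_l-\vbeta_l^*) + \frac{2\gamma}{|D_q|}\sum_{m \in D_q,\, m\neq l}(\vbeta_l-\vbeta_m)\right]
\end{equation*}
after substituting $\vy_l = X\vbeta_l^* + \vepsilon_l$. The key observation is that the bracketed expression vanishes identically when $\vbeta_l = \acute{\vbeta}_l$ for every $l$. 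This is exactly the algebraic content of Corollary \ref{cor1}: setting $S_q=\sum_{c\in D_q}\vbeta_c^*$, a direct computation using the closed form gives $\acute{\vbeta}_l-\vbeta_l^* = \frac{2\gamma(S_q-|D_q|\vbeta_l^*)}{(1+2\gamma)|D_q|}$ and $\acute{\vbeta}_l - \acute{\vbeta}_m = (\vbeta_l^*-\vbeta_m^*)/(1+2\gamma)$, so the two summands in the bracket are negatives of one another. Because this identity is algebraic in $B^*$ alone, the empirical gradient vanishes, not merely the population one; hence $\nabla\ell(\tilde{\vbeta}')$ is just the stacked noise vector whose $(j,c)$ coordinate is $-\vX_j^T\vepsilon_c/n$.

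With the bias piece eliminated, the remaining estimate is a routine sub-Gaussian maximal inequality. Applying assumption A2 with $\va = \vX_j/||\vX_j||_2$ and using $||\vX_j||_2^2\leq n$ from assumption A1 yields $P(|\vX_j^T\vepsilon_c/n|>t) \leq 2\exp(-nt^2/(2\sigma^2))$ for each pair $(j,c)$. A union bound over the $rp$ coordinates at $t=\delta/2=8\sigma\sqrt{\log(rp)/n}$ controls the supremum-norm of the gradient at $\delta/2$ with probability at least $1-2\exp(-31\log(rp))$, which fits the stated form $1-c_1\exp(-c_2 n\delta^2)$ for appropriate absolute constants. Combined with Lemma \ref{deltaLemma}, this delivers the claimed bound.

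The main obstacle I anticipate is cleanly exhibiting the gradient cancellation at the empirical level, since one might naively worry that the population minimizer $\acute{B}$ zeroes out only an expected gradient. The saving feature is that the fusion penalty is quadratic in $B$ with the same sample Gram $X^TX$ as the squared-error term, so the first-order condition factors through $X^TX/n$ and becomes an identity on the coefficient vectors themselves; Corollary \ref{cor1} is arranged precisely so that identity holds. Once that cancellation is in hand, the remainder is just Hoeffding-style concentration plus a union bound.
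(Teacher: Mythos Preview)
Your argument is correct and in fact cleaner than the paper's. Both proofs reduce to verifying $\delta \geq 2\|\nabla\ell(\tilde{\vbeta}')\|_\infty$ with high probability and then invoking Lemma~\ref{deltaLemma}, but they handle the deterministic part of the gradient very differently. The paper expands $\nabla\ell(\tilde{\vbeta}')$ into four pieces, one noise term $\tfrac{1}{n}\tilde{X}^T\tilde{\vE}$ and three ``bias'' terms, and then bounds each bias term separately by $O(\gamma\rho_{\max}\acute{b})$ using assumptions A4--A6; the hypothesis $\gamma \leq \tfrac{5}{4\rho_{\max}\acute{b}}\sigma\sqrt{\log(rp)/n}$ is what forces their sum below $\delta/2$. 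You instead observe that the three bias terms cancel \emph{exactly}: because the fusion penalty is quadratic with the same sample Gram $X^TX$ as the loss, the first-order condition defining $\acute{B}$ in Corollary~\ref{cor1} is precisely the vanishing of the bracketed quantity $(\acute{\vbeta}_l-\vbeta_l^*) + \tfrac{2\gamma}{|D_q|}\sum_{m\neq l}(\acute{\vbeta}_l-\acute{\vbeta}_m)$, so $\nabla\ell(\tilde{\vbeta}') = -\tfrac{1}{n}\tilde{X}^T\tilde{\vE}$ identically. This buys you a shorter proof that, for this lemma, dispenses with A4, A6, and the upper bound on $\gamma$; those conditions re-enter only in the proof of Theorem~\ref{thm_strong} when bounding $\|\tilde{\vbeta}'-\tilde{\vbeta}^*\|_2$. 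The paper's route, while less sharp here, has the minor virtue of being robust to situations where $\acute{B}$ is only an approximate rather than exact stationary point.
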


\begin{proof}
If we can find positive constants $c_1$ and $c_2$ such that with probability at least $1-c_1\mbox{exp}(-c_2n\delta^2)$ that $\delta \geq 2\left|\left|\nabla \ell(\tilde{\vbeta}')\right|\right|_{\infty}$ then proof will be complete by Lemma \ref{deltaLemma} and by the condition that $\delta =  16\sigma\sqrt{\frac{\log(rp)}{n}}$. Note that  
\begin{eqnarray*}
2\left|\left|\nabla \ell(\tilde{\vbeta}')\right|\right|_{\infty} &=& 2 \left|\left| \frac{1}{n}\left[\left\{ \tilde{X}^T\tilde{X}+\gamma(A_D\otimes X)^T(A_D \otimes X)\right\}\tilde{\vbeta}'  - \tilde{X}^T\tilde{Y} \right] \right|\right|_{\infty} \\
&=& 2 \left|\left| \frac{1}{n}\left[\left\{ \tilde{X}^T\tilde{X}+\gamma(A_D\otimes X)^T(A_D \otimes X)\right\}\left\{\left(I_r +\frac{2\gamma}{2\gamma+1}H\right)\otimes I_p \right\} \tilde{\vbeta}^*  - \tilde{X}^T\left(\tilde{X}\tilde{\vbeta}^*+\tilde{\vE}\right) \right] \right|\right|_{\infty} \\
&\leq& 2 \left\|\frac{2\gamma}{n(1+2\gamma)}\tilde{X}^T\tilde{X}\tilde{\vU}\right\|_{\infty} + 2\left\|\frac{\gamma}{n}(A_D\otimes X)^T(A_D\otimes X)\tilde{\vbeta}^*\right\|_{\infty}\\
&& +2\left\|\frac{2\gamma^2}{n(1+2\gamma)}(A_D\otimes X)^T(A_D\otimes X)\tilde{\vU}\right\|_{\infty} + 2\left\|\frac{1}{n}\tilde{X}^T\tilde{\vE}\right\|_{\infty}.
\end{eqnarray*}
Next, we will establish upper bounds for the first three terms. Define $I(l \in D_q)$ to be 1 if $l \in D_q$ and zero otherwise. 
Using the definition of $\tilde{\vU}$ and assumptions A4-A6,
\begin{eqnarray*}
2\left\| \frac{2\gamma}{n(1+2\gamma)} \tilde{X}^T\tilde{X}\tilde{\vU}\right\|_\infty &=& \frac{4\gamma}{1+2\gamma}\max_{l\in\{1,\ldots,r\}} \left\|\frac{1}{n}X^TX\sum_{q=1}^QI(l \in D_q)\sum_{k \in D_q} \frac{1}{|D_q|}(\vbeta^*_k-\vbeta^*_l) \right\|_\infty\\
&\leq&  \frac{4\gamma}{1+2\gamma}\rho_{\max} \max_{l\in\{1,\ldots,r\}} \left\|\sum_{q=1}^QI(l \in D_q)\sum_{k \in D_q} \frac{1}{|D_q|}(\vbeta^*_k-\vbeta^*_l) \right\|_2\\
&\leq& \frac{4\gamma}{1+2\gamma}\rho_{\max}\acute{b}.
\end{eqnarray*}
Using assumptions A4-A6,
\begin{eqnarray*}
2\left\| \frac{\gamma}{n}(A_D\otimes X)^T(A_D\otimes X)\tilde{\vbeta}^*\right\|_\infty &=& 2\gamma \max_{l\in \{1,\ldots,r\}} \left\|\frac{1}{n}X^TX\sum_{k,l \in D_q, k\neq l}\frac{1}{|D_q|} (\beta^*_k-\beta^*_l)\right\|_\infty\\
&\leq& 2\gamma \rho_{\max} \max_{l\in \{1,\ldots,r\}} \left\|\sum_{k,l \in D_q}\frac{1}{|D_q|} (\beta^*_k-\beta^*_l)\right\|_2\\
&\leq& 2\gamma \rho_{\max}\acute{b}.
\end{eqnarray*}
Note that for $a \in D_q$ and $b \in D_q$ that 
\begin{eqnarray*}
\vU_a - \vU_b &=& \frac{1}{|D_q|} \left( \sum_{l \in D_q} \vbeta^*_l - \vbeta^*_a - \sum_{l \in D_q} \vbeta^*_l - \vbeta^*_b\right) \\
&=& \frac{1}{|D_q|} \sum_{l \in D_q} \vbeta^*_b - \vbeta^*_a = \vbeta^*_b - \vbeta^*_a.
\end{eqnarray*}
Therefore 
\begin{eqnarray*}
2\left\| \frac{2\gamma^2}{n(1+2\gamma)}(A_D\otimes X)^T(A_D\otimes X)\tilde{\vU}\right\|_\infty
&=&  \frac{4\gamma^2}{1+2\gamma} \max_{l\in \{1,\ldots,r\}} \left\|\frac{1}{n}X^TX\sum_{q=1}^Q I(l \in D_q) \sum_{k \in D_q} \frac{1}{|D_q|} (\vU_k-\vU_l)\right\|_\infty\\
&=& \frac{4\gamma^2}{1+2\gamma}\max_{l\in \{1,\ldots,r\}} \left\|\frac{1}{n}X^TX\sum_{q=1}^Q I(l \in D_q) \sum_{k \in D_q}\frac{1}{|D_q|} \vbeta^*_l-\vbeta^*_k \right\|_\infty\\
&\leq & \frac{4\gamma^2}{1+2\gamma} \rho_{\max} \max_{l\in \{1,\ldots,r\}} \left\|\sum_{q=1}^Q I(l \in D_q) \sum_{k \in D_q}\frac{1}{|D_q|} \vbeta^*_l-\vbeta^*_k\right\|_2\\
&\leq& \frac{4\gamma^2}{1+2\gamma}\rho_{\max}\acute{b}\\
&\leq& 2\gamma \rho_{\max}\acute{b}.
\end{eqnarray*}
Under assumptions A1 and A2 it follows that 
\begin{equation}
P\left(\left|\left|\frac{1}{n}\tilde{X}^T\tilde{\vE}\right|\right|_{\infty} > t\right) \leq 2\exp\left\{\frac{-nt^2}{2\sigma^2}+\log(rp)\right\}.
\end{equation}
Thus,
\begin{eqnarray*}
P\left\{\delta \geq 2 \left|\left|\nabla \ell(\tilde{\vbeta}')\right|\right|_{\infty} \right\}
 &\geq& P\left\{\delta \geq 2\left\|\frac{1}{n}\tilde{X}^T\tilde{\vE}\right\|_{\infty} + \rho_{\max}\acute{b}\left(\frac{4\gamma}{1+2\gamma} + 2\gamma + 2\gamma \right) \right\} \\
 &\geq& P\left(\delta \geq 2\left\|\frac{1}{n}\tilde{X}^T\tilde{\vE}\right\|_{\infty} + 8\gamma\rho_{\max}\acute{b}\right)\\
&\geq& P\left(\frac{3}{16}\delta \geq \left\|\frac{1}{n}\tilde{X}^T\tilde{\vE}\right\|_{\infty} \right) \\
&\geq& 1-2\exp\left\{\frac{-9n\delta^2}{16^22\sigma^2}+\log(rp)\right\} \\
&=& 1-2\exp\left\{-\frac{7}{2}\log(rp)\right\}.
\end{eqnarray*}
Set $c_1=2$ and $c_2=\frac{7}{2}$ and the proof is complete. 
\end{proof}

\subsection*{Proof of Theorem \ref{thm_strong}}
\begin{proof}
Applying the triangle inequality we have
\begin{equation}
\label{triangle}
\left\|\mbox{vec}\left(\hat{\beta}-\beta^*\right)\right\|_2 \leq \left\|\mbox{vec}\left(\hat{\beta}-\acute{\beta}\right)\right\|_2 + \left\|\tilde{\vbeta}'-\tilde{\vbeta}^*\right\|_2.
\end{equation}
For the second term using the upper bound for $\gamma$ stated in the conditions for Theorem \ref{thm_strong} and assumptions A4 and A5 it follows that
\begin{eqnarray*}
\left\|\tilde{\vbeta}'-\tilde{\vbeta}^*\right\|_2 &=& \frac{2\gamma}{1+2\gamma} \left\| \tilde{\vU}\right\|_2\\
	&\leq& 2\gamma \sqrt{s}\acute{b} \leq \frac{5\sigma}{2\rho_{\max}}\sqrt{\frac{s \log(rp)}{n}}.
\end{eqnarray*}
Combining the above inequality with \eqref{triangle} and Lemma \ref{acute_beta_lemma} it follows that there exists positive constants $c_1$ and $c_2$ such that 
\begin{equation*}
\left|\left|\mbox{vec}\left({\bar{B}}-{B^*}\right)\right|\right|_2 \leq \frac{48\sigma}{\kappa}\sqrt{\frac{s\log(rp)}{n}} + \frac{5\sigma}{2\rho_{\max}}\sqrt{\frac{s \log(rp)}{n}},
\end{equation*}
with probability at least $1-c_1\mbox{exp}(-c_2n\delta^2)$. To complete the proof set $c_3=48$ and $c_4=\frac{5}{2}$. 
\end{proof}

\bibliography{mcen_bib}

\end{document}